\documentclass[10pt,journal,compsoc]{IEEEtran}
\usepackage[utf8]{inputenc}
\usepackage[T1]{fontenc}

\usepackage{standalone}
\usepackage[english]{babel}
\usepackage{multirow}
\usepackage{pgfplots}
\pgfplotsset{compat=1.12}
\usepgfplotslibrary{statistics}
\usepackage[tableposition=top]{caption}
\usepackage[microtype,citepackage]{pauli_new}
\usepackage{siunitx}
\usepackage{booktabs}
\usepackage{url}
\usepackage{algorithm}
\usepackage[noend]{algpseudocode}
\usepackage{pdfpages}
\usepackage{xcolor}
\usepackage{color,soul}
\renewcommand{\hl}[1]{#1}
\makeatletter
\let\MYcaption\@makecaption
\makeatother
\usepackage[font=footnotesize]{subcaption}
\makeatletter
\let\@makecaption\MYcaption
\makeatother
\newcommand*\xbar[1]{%
  \hbox{%
    \vbox{%
      \hrule height 0.5pt %
      \kern0.3ex%
      \hbox{%
        \kern-0.1em%
        \ensuremath{#1}%
        \kern-0.1em%
      }%
    }%
  }%
}

\DeclareMathOperator{\supp}{supp}
\DeclareMathOperator{\cov}{cov}
\DeclareMathOperator{\corr}{corr}

\DeclareMathOperator{\tr}{tr}

\newcommand{\NMF}[1]{\mathit{NMF}_{\text{#1}}}
\newcommand{\DNMF}[1]{\mathit{DNMF}_{\text{#1}}}

\setlength{\tabcolsep}{4pt}

\setcounter{topnumber}{3}

\newif\ifappendix\appendixfalse

\addto\captionsenglish{}

\allowdisplaybreaks

\begin{document}

\title{Finding Rule-Interpretable Non-Negative Data Representation}

\author{%
  Matej Mihel\v{c}i\'c,
  Pauli Miettinen%
  \IEEEcompsocitemizethanks{\IEEEcompsocthanksitem M. Mihel\v{c}i\'c is with   Department of Mathematics, University of Zagreb, Zagreb, Croatia.\protect\\
    E-mail: matmih@math.hr
    \IEEEcompsocthanksitem P. Miettinen is with University of Eastern Finland, Kuopio, Finland.\protect\\
    E-Mail: pauli.miettinen@uef.fi}%
  \thanks{Accepted for publication in IEEE Transactions on Knowledge and Data Engineering }%
}

\markboth{IEEE Transactions on Knowledge and Data Engineering}%
{Mihel\v{c}i\'c and Miettinen: Finding Rule-Interpretable Non-Negative Data Representation}

\IEEEtitleabstractindextext{%
\begin{abstract}

Non-negative Matrix Factorization (NMF) is an intensively used technique for obtaining parts-based, lower dimensional and non-negative representation. Researchers in biology, medicine, pharmacy and other fields often prefer NMF over other dimensionality reduction approaches (such as PCA) because the non-negativity of the approach naturally fits the characteristics of the domain problem and its results  are easier to analyze and understand. Despite these advantages, obtaining exact characterization and interpretation of the NMF's latent factors can still be difficult due to their numerical nature. Rule-based approaches, such as rule mining, conceptual clustering, subgroup discovery and redescription mining, are often considered more interpretable but lack lower-dimensional representation of the data. We present a version of the NMF approach that merges rule-based descriptions with advantages of part-based representation offered by the NMF. Given the numerical input data with non-negative entries and a set of rules with high entity coverage, the approach creates the lower-dimensional non-negative representation of the input data in such a way that its factors are described by the appropriate subset of the input rules. In addition to revealing important attributes for latent factors, their interaction and value ranges, this approach allows performing focused embedding potentially using multiple overlapping target labels.

 \end{abstract}

\begin{IEEEkeywords}
Rules, clustering, subgroups, redescriptions, interpretable NMF, guided embedding.
\end{IEEEkeywords}%
}

\maketitle

\section{Introduction}
\label{sec:introduction}

Data analysis often involves applying different techniques from statistics, data mining, machine learning and signal processing to obtain satisfactory results in the given domain. Since the data typically has high dimensionality, it is often useful to decompose it into smaller components \cite{cichocki09nonnegative}. Multiple decomposition techniques have been developed (e.g.\ SVD, PCA, NMF, ICA) with their specific properties and goals \cite{cichocki09nonnegative}. 

Non-negative matrix factorization (NMF) \cite{paatero94nmf,lee99nmf} provides parts-based representation of the non-negative input data, often found in domains such as image processing, computer vision, biology, medicine or pharmacy. Many parameters in these domains have a natural representation with non-negative values; thus  it makes sense to use the NMF approach to obtain latent factors. Non-negativity of the produced latent factors allows to partially obtain their meaning as the weighted sum of the attributes contained in the input data. Although very useful in many applications, this level of factor understanding can be inadequate in some domains (e.g. biology or medicine), where more fine-grained interpretation -- for instance, understanding a valid range of values in an attribute -- is crucial for making informed decisions and appropriate conclusions. This problem has been slightly alleviated by the introduction of sparsity constraints, which aim to obtain adequate lower-dimensional data representation by keeping weights in factor representation either very close to $1$ or very close to $0$. 

Sparsity, might not be enough, and often identifying interesting groups of observations is preferable from interpretability's point of view. 
Methods for this include clustering \cite{CoxC}, %
inductive learning \cite{Michalski83InL}, subgroup discovery \cite{Klogsen96SD}, %
redescription mining \cite{Ramakrishnan04rm}, anomaly detection \cite{Grubbs69OD} and many others. Each of these approaches has their special properties and advantages, which mostly provide deeper, interpretable understanding of some aspects of the studied domain. They identify groups of entities with some properties and many in addition offer rule-based descriptions. Although highly interpretable, these approaches are not without problems. Majority of the unsupervised rule-producing approaches construct very large rule-sets that are ultimately hard to explore. Structuring them is very beneficial when performing analyses. Predictive rule sets aim to produce smaller sets of rules, but this is not always achieved in practice. Using rules as features, although shown to provide benefits for predictive models \cite{mozina2008rectifying,vens2011random,duivesteijn2012multi}, necessarily increases the dimensionality of the original problem. 

In this work, we aim to combine the advantages of both types of data analysis. The goal is to obtain non-negative part-based representation of the input data in such a way that the resulting factors are fully interpretable and represented by a number of rules obtained by any rule-producing approach or the domain expert. %
We propose a regularized NMF approach, similar to the guided NMF \cite{GNMF}, that takes a non-negative input data and a set of rules as input and produces a lower-dimensional data representation, where each latent factor is described by a number of input rules. Rules are either represented as a matrix, where rows denote entities and columns denote rules, or the constraints derived from rules are written in the clustering matrix, where rows denote entities and columns denote clusters derived from rules. These clusters may include a mixture of different rule-based objects such as rules, subgroups or redescriptions. One fully transparent way of obtaining interesting groups may include exploratory analyses of sets containing rule-based objects (see \cite{MihelcicInterSet, MihelcicRMExploration} for the description of an interactive tool allowing exploration of redescription sets). The proposed approach provides novel insights and data analysis capabilities that were previously either hard or impossible to obtain, but are at the same time highly desirable and needed for research conducted in various scientific domains (see Section \ref{sec:usecase}). These capabilities include \emph{exact interpretation of NMF latent factors} (including exact numeric intervals or categorical values of attributes with the ability of observing and analyzing relations between attributes associated with a given factor), ability to perform \emph{guided embedding}, combining multiple sources of information (e.g. embedding genes belonging to a selected subset of organisms and having a selected subset of gene functions by following predefined criteria on phenotypes etc.), and the ability to utilize information about multiple overlapping target variables (each target label can be modelled as one rule in the set of constraints). Such targets occur in the task of multi-label classification \cite{duivesteijn2012multi,HMC}.
We provide a detailed analysis of the approach pointing out scenarios that potentially lead to some shortcomings but we also offer solutions to these problems. Furthermore, we present a real-world use-case in Section \ref{sec:usecase}.

\section{Background}
\label{sec:background}

NMF factorizes a non-negative matrix $\mX\in \nR^{m\times n}$, with $m$ rows and $n$ columns, into two non-negative matrices of smaller rank $\mF\in \nR^{m\times k}$ and $\mG\in \R_{\geq 0}^{n\times k}$, where $k\ll \min(m,n)$, so that $\mX\approx \mF\mG^{\intercal}$. The non-negativity and parts-based interpretation is important in many applications \cite{cichocki09nonnegative}.

There are many optimization approaches to compute the NMF \cite{cichocki09nonnegative}, the most popular being the multiplicative updates approach~\cite{lee99nmf}. In general, the NMF factorization is not unique. Thus, it produced unsatisfiable results in some applications. One approach to alleviate the problem, allowing a tradeoff between different requirements, is by adding regularization or penalty terms \cite{cichocki09nonnegative}. The general form of the regularized NMF optimization function is: 
\begin{equation}
\min_{\mF\geq 0,\ \mG\geq 0} \norm{\mX-\mF\mG^{\intercal}}_{F}^2 + \sum \limits_{i=0}^{l}\lambda_i \varphi_i(\mathcal{P}_i)\; .
\end{equation}
Variables $\lambda_i\geq 0,\ i=1,\dots,l$ are regularization parameters and $\varphi_i\colon \times_{h=1}^{\abs{\mathcal{P}_i}} \mathcal{D}_h\to \R$ are penalty terms enforcing predefined constraints; $\mathcal{D}_1,\ldots, \mathcal{D}_{\abs{\mathcal{P}_i}}$ are the domains of parameters contained in parameter matrices $\mathcal{P}_i$. Usually at least one penalty term constrains $\mF$ or $\mG$. 

Rule learning \cite{Furnkranz14RL} is one of the fundamental, extensively researched fields of machine learning. The overall interpretability of the output results of all methods in this field makes it ideal for data mining and knowledge discovery. We focus on two aspects of rule learning, supervised rule induction \cite{Furnkranz14RL} and descriptive rule induction (rule-based conceptual clustering) \cite{Stepp86CC,Furnkranz14RL}. The main goal in supervised rule induction is to obtain a small set of comprehensible rules that are highly predictive, given some target property. The rule-based conceptual clustering algorithms construct rules  describing related groups of examples, given some quality function (without using information about target label). Subgroup discovery \cite{wrobel1997algorithm} aims to find descriptions of groups of entities with interesting or statistically unusual distribution of a target label.
Redescription mining \cite{Ramakrishnan04rm,GMRMBook} is an unsupervised task that aims to find groups of entities that can be characterized in multiple ways and to provide re-descriptions of such entities. One can use single or multi-view data, where views are disjoint sets of attributes describing a common set of entities.  Descriptions provided by redescription mining are called redescriptions. Redescriptions are tuples of rules, where each rule contains a subset of attributes from a corresponding data view and all rules from a tuple describe a common subset or very similar subsets of entities.  Attributes within each rule of a redescription can be connected using conjunction, disjunction and negation logical operator (thus its language is more expressive than the one used in regular task of rule mining and subgroup discovery).

\section{NMF with describable latent factors}
\label{sec:nmf}

Given a rule set $\mathcal{R}$ obtained on some input dataset $\mathcal{D}$,
we use  $\supp(r)$ to denote a support set of a rule $r\in \mathcal{R}$, that is, the set of all entities described by a rule $r$. 
We also use $\cov(\mathcal{R}) = \abs*{\bigcup_{r\in \mathcal{R}} \supp(r)}/\abs{\mathcal{D}}$ to denote the fraction of entities from an input dataset described by at least one rule from $\mathcal{R}$.

To describe the NMF factors with rules, we must somehow match them. To that end, recall that clustering can be depicted as a particular kind of a matrix.

\begin{definition}
  \label{def:ideal}
  Let $C_1, \ldots, C_k$ denote a clustering of the entities of $\mathcal{D}$ (where entity might belong to multiple clusters). Matrix $\widetilde{\mF}\in\{0,1\}^{m\times k}$ is a \emph{clustering assignment matrix} if $\widetilde{\mF}_{i,j}=1$ exactly when entity $e_i\in\mathcal{D}$ belongs to cluster $C_j$. We call clustering assignment matrices \emph{ideal} matrices.
\end{definition}

As the ideal matrices are special cases of nonnegative matrices, NMF can itself be seen as a generalization of clustering. We will utilize this connection in our definition, but to get from NMF to clustering, we need to turn the non-negative values into binary assignments.

\begin{definition}
  \label{def:cluster_assignment}
  A non-negative matrix $\mF\in\nR^{m\times k}$ \emph{assigns} row $i$ (entity $e_i$) to cluster $j$ if $\mF_{i,j}/\max_{\ell}\mF_{i,\ell}\geq 0.5$. If $C_{f_1},\ldots, C_{f_k}$ is a clustering such that $C_{f_z}$ corresponds to the $z$th column of $\mF$, we say that $\mF$ \emph{induces} $C_{f_1},\ldots, C_{f_k}$ and that $C_{f_z}$ is \emph{associated} to $f_z$.
\end{definition}

Notice that our definition of clustering assignment might assign the same entity into multiple clusters. We can now define our main problem.

\begin{definition}
\label{def:DNMF}
Given a non-negative matrix $\mX\in \nR^{m\times n}$ (with $m$ entities and $n$  attributes), a set of rules $\mathcal{R}$ obtained on $\mX$, such that $\cov(\mathcal{R})\approx 1$, and a number of factors $k\in \N$, the goal of the NMF with describable factors is to find matrices $\mF\in \nR^{m\times k}$ and $\mG\in \nR^{n\times k}$ such that (1) $\mX\approx \mF \mG^{\intercal}$ and (2) if $C_{f_z}, z\in \{1,\dots, k\}$, is induced by $\mF$, then $\bigcup_{r_i\in S} \supp(r_i) \approx C_{f_z}$ for some $S\subseteq \mathcal{R}$.
\end{definition}

The crux of the above definition is that the induced clustering highlights the important entities in the factor $f_z$  and the set $S$ of rules will then provide the explanation on \emph{why} these entities are important in this factor. Alternatively, it is possible that the user already knows some important clusters and wants to explain the NMF factors using them. In that case, we can change the second condition to
\begin{quote}
  (2) Given pre-defined clusters $C^*_1, C^*_2, \ldots, C^*_k$, each $C^*_z$ must have a factor $f_z'$ such that $C^*_z \approx C_{f_z'}$. 
\end{quote}

\subsection{Algorithms}
\label{sec:algorithms}

Next, we define optimization functions that can be used to solve this task. They differ in theoretical properties, level and distribution of description quality of obtained factors.  To optimize these functions and find the desired decomposition of matrix $\mX$, we use the multiplicative updates approach \cite{lee99nmf}. We apply the rectified version \cite{cichocki09nonnegative} with $\varepsilon = 10^{-9}$ to avoid convergence problems reported in the literature \cite{Berry2007NMFP,GillisConvP}. The first optimization function \eqref{eq:dnmf1} requires predefined ideal matrix $\widetilde{\mF}$ (see Def.~\ref{def:ideal}) and a regularization parameter $\lambda \geq 0$:

\begin{equation}
\label{eq:dnmf1}
\min_{\mF\geq 0,\ \mG\geq 0} \norm*{\mX-\mF \mG^{\intercal}}_{F}^2 +\lambda \norm*{\mF - \widetilde{\mF}}_{F}^2\; .
\end{equation}
 
The second optimization function uses regularization parameter $\lambda\geq 0$, a rule constraint matrix $\mP\in \{0,1\}^{m\times \abs{\mathcal{R}}}$ and a factor cost matrix $\mA\in \nR^{k\times \abs{\mathcal{R}}}$. $\mP$ is a binary matrix whose rows represent entities and columns represent rules from $\mathcal{R}$. For $e_i\in \mathcal{D}$ and $r_j\in \mathcal{R}$,  $\mP_{i,j}=1$ iff $e_i \in \supp(r_j)$. Rows of $\mA$ represent factors and columns represent rules. Values of this matrix affect the number of non-zero elements and their magnitude in the appropriate rows of the $\mF$ matrix,  depending on the available rule-set constraints. Defining these costs as the intersection sizes between each rule and the entity cluster associated to some latent factor ($\mA_{i,j} = \abs*{C_{f_i}\cap \supp(r_j)}$) ultimately leads to obtaining $\mF$ that provides clustering as provided by the ideal matrix. This approach provides more flexibility than the one using the ideal matrix and its optimization function is defined as

\begin{equation}
\label{eq:dnmf}
\min_{\mF\geq 0,\ \mG\geq 0} \norm{\mX-\mF \mG^{\intercal}}_{F}^2 +\lambda \norm{\mA-\mF^{\intercal} \mP}_{F}^2\; .
\end{equation}

We define the multiplicative update rules for both optimization functions, although we provide thorough theoretical analysis only for the second proposed optimization function. Larger values of parameter $\lambda$ increase the importance of constraints leading to better factor descriptions, but simultaneously decreasing data representation accuracy.

Matrix $\mG$ is unaffected by the imposed constraints in both functions, thus the multiplicative update rules for $\mG$ remain unchanged:  

\begin{equation}
\mG_{j,k}\leftarrow \max\left(\varepsilon, \mG_{j,k}\cdot \frac{(\mX^{\intercal} \mF)_{j,k}}{(\mG \mF^{\intercal} \mF)_{j,k}}\right)\; .
\end{equation}

To find the multiplicative update rules for matrix $\mF$ from \eqref{eq:dnmf}, we search for the local minima of the function
\begin{equation}
  \label{eq:J}
  J(\mF) = \norm{\mX-\mF \mG^{\intercal}}_{F}^2 +\lambda \norm{\mA-\mF^{\intercal} \mP}_{F}^2\; .
\end{equation}
  This is achieved by computing $\frac{\partial J}{\partial \mF}(\mF) = 2\mF\mG^{\intercal}\mG - 2\mX\mG+2\lambda(\mP\mP^{\intercal}\mF-\mP\mA^{\intercal})$ and solving $\mF\mG^{\intercal}\mG - \mX\mG+\lambda(\mP\mP^{\intercal}\mF-\mP\mA^{\intercal}) = 0$. From this and applying a rectifier function, we get:  

\begin{equation}
 \label{eq:upF}
\mF_{i,k}\leftarrow \max\left(\varepsilon, \mF_{i,k}\cdot \frac{(\mX\mG+\lambda \cdot \mP\mA^{\intercal})_{i,k}}{(\mF\mG^{\intercal}\mG+\lambda\cdot \mP\mP^{\intercal}\mF)_{i,k}}\right)\; .
\end{equation}

The convergence proof of the proposed multiplicative updates follows the steps laid out in \cite{cichocki09nonnegative,GillisConvP} (which directly prove the convergence of multiplicative updates for $\mG$).
Following the approach in \cite{GillisConvP}, the constraints of the optimization problem (\ref{eq:dnmf}) are strengthened to $\mF\geq \varepsilon,\ \mG\geq \varepsilon$.

\begin{theorem}
The function $J$ of \eqref{eq:J} is monotonically decreasing under the update rule \eqref{eq:upF}, for any constant $\varepsilon>0,\ \mF\geq \varepsilon,\ \mG\geq \varepsilon$. Every limit point obtained using multiplicative updates (\ref{eq:upF}) is a stationary point of the strengthened optimization problem (\ref{eq:dnmf}).
\end{theorem}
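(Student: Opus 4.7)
The plan is to follow the classical Lee–Seung auxiliary-function argument, in the rectified form of Gillis that handles the strengthened constraint $\mF\geq\varepsilon$. Concretely, I would construct a function $G(\mF,\mF')$ with $G(\mF,\mF)=J(\mF)$ and $G(\mF,\mF')\geq J(\mF)$, such that the minimizer of $G(\cdot,\mF^t)$ over $\{\mF\geq\varepsilon\}$ is exactly the right-hand side of \eqref{eq:upF}. Monotonicity then follows from the standard sandwich
\begin{equation*}
J(\mF^{t+1})\leq G(\mF^{t+1},\mF^t)\leq G(\mF^t,\mF^t)=J(\mF^t).
\end{equation*}

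First I would expand
\begin{equation*}
J(\mF)=c-2\tr(\mF^{\intercal}\mX\mG)+\tr(\mF\mG^{\intercal}\mG\mF^{\intercal})-2\lambda\tr(\mF^{\intercal}\mP\mA^{\intercal})+\lambda\tr(\mF^{\intercal}\mP\mP^{\intercal}\mF),
\end{equation*}
where $c$ collects terms independent of $\mF$. Since $\mX,\mG,\mP,\mA$ are entrywise non-negative, both matrices multiplying $\mF$ linearly ($\mX\mG$ and $\mP\mA^{\intercal}$) are non-negative, and the two quadratic forms involve the PSD matrices $\mG^{\intercal}\mG$ and $\mP\mP^{\intercal}$ with non-negative entries. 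For each quadratic term I would apply the bound $\tr(\mF\mB\mF^{\intercal})\leq\sum_{i,k}(\mF^t\mB)_{i,k}\mF_{i,k}^2/\mF^t_{i,k}$, valid for non-negative symmetric $\mB$. For each linear term I would apply $-\tr(\mF^{\intercal}\mC)\leq-\sum_{i,k}\mC_{i,k}\mF^t_{i,k}(1+\log(\mF_{i,k}/\mF^t_{i,k}))$, valid for non-negative $\mC$. Summing yields an auxiliary function $G(\mF,\mF^t)$ that is separable and strictly convex in each $\mF_{i,k}$. Setting $\partial G/\partial\mF_{i,k}=0$ produces exactly the multiplicative ratio in \eqref{eq:upF}; the rectification $\max(\varepsilon,\cdot)$ then coincides with the minimizer on the strengthened feasible set, because for a scalar strictly convex function the constrained minimum over $[\varepsilon,\infty)$ is either the unconstrained stationary point or $\varepsilon$.

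For the second conclusion I would argue as in Gillis. The iterates lie in $\{\mF\geq\varepsilon\}$; because $J\geq 0$ and $\{J(\mF^t)\}$ is decreasing, it converges, and coupled with the analogous monotone update for $\mG$ this gives a bounded sequence with at least one limit point $(\mF^\ast,\mG^\ast)$. Continuity of the update map (guaranteed because, under $\mF\geq\varepsilon$ and $\mG\geq\varepsilon$, denominators $(\mF\mG^{\intercal}\mG+\lambda\mP\mP^{\intercal}\mF)_{i,k}$ are uniformly bounded away from $0$) together with monotone decrease of $J$ forces any limit point to be a fixed point of the update. A fixed point then translates termwise into the KKT conditions of the strengthened problem: whenever $\mF^\ast_{i,k}>\varepsilon$, the factor in \eqref{eq:upF} equals $1$, so $\bigl(\partial J/\partial\mF\bigr)_{i,k}=0$; whenever $\mF^\ast_{i,k}=\varepsilon$, the factor is $\leq 1$, giving $\bigl(\partial J/\partial\mF\bigr)_{i,k}\geq 0$, which is precisely stationarity on $\{\mF\geq\varepsilon\}$.

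The main obstacle I anticipate is the auxiliary-function construction for the penalty term $\lambda\norm{\mA-\mF^{\intercal}\mP}_F^2$: unlike the fitting term, it produces a quadratic of the form $\tr(\mF^{\intercal}\mP\mP^{\intercal}\mF)$ in which $\mF$ appears on both sides multiplied by the same PSD matrix $\mP\mP^{\intercal}$, so one must verify that the upper-bound inequality above is used with the correct orientation so that its partial derivative combines additively with the fitting-term contribution to yield the denominator in \eqref{eq:upF}. Once this bookkeeping is done carefully, the remainder of the argument is routine and mirrors the cited convergence proofs.
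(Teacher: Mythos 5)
Your overall strategy is the right one and is essentially the paper's: a Ding-style auxiliary (majorizing) function for the monotonicity claim, followed by the Gillis-style fixed-point/KKT argument for stationarity of limit points. Your treatment of the second claim is in fact more explicit than the paper's and is correct: boundedness of $J$ from below plus monotone decrease gives convergence of the objective values, the denominators are bounded away from zero on the strengthened feasible set so the update map is continuous, and translating the fixed-point equation $\overline{\mF}_{i,k}=\max(\varepsilon,\alpha_{i,k}\overline{\mF}_{i,k})$ coordinatewise into $(\nabla J)_{i,k}=0$ when $\overline{\mF}_{i,k}>\varepsilon$ and $(\nabla J)_{i,k}\geq 0$ when $\overline{\mF}_{i,k}=\varepsilon$ is exactly the KKT condition the paper invokes.

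There is, however, one concrete flaw in the first half. You majorize the negative linear terms $-2\tr(\mF^{\intercal}\mX\mG)-2\lambda\tr(\mF^{\intercal}\mP\mA^{\intercal})$ with the logarithmic bound $-c\,\mF_{i,k}\leq -c\,\mF^t_{i,k}\bigl(1+\log(\mF_{i,k}/\mF^t_{i,k})\bigr)$, $c\geq 0$, while majorizing the quadratic traces with the Ding inequality. Setting the derivative of that auxiliary function to zero gives $\mF_{i,k}=\mF^t_{i,k}\bigl((\mX\mG+\lambda\mP\mA^{\intercal})_{i,k}/(\mF^t\mG^{\intercal}\mG+\lambda\mP\mP^{\intercal}\mF^t)_{i,k}\bigr)^{1/2}$, i.e.\ the square-root multiplicative update familiar from Ding et al.'s orthogonal and symmetric NMF, \emph{not} the plain ratio of \eqref{eq:upF}; as written, your sandwich therefore proves monotonicity for a different update rule. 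The fix is to leave the linear terms unmajorized --- they are already separable in the entries of $\mF$ and enter the auxiliary function with equality --- and to apply the quadratic upper bound only to $\tr(\mF\mG^{\intercal}\mG\mF^{\intercal})$ and $\lambda\tr(\mF^{\intercal}\mP\mP^{\intercal}\mF)$. The coordinatewise minimizer of the resulting separable convex function is then exactly the ratio in \eqref{eq:upF}, and your rectification argument for the constrained minimum over $[\varepsilon,\infty)$ goes through unchanged. This is also the choice the paper makes: its auxiliary function $Z(\mF,\mF')$ carries the linear terms over exactly (no logarithm) and only bounds the quadratic part, which is precisely why the plain multiplicative ratio emerges as the minimizer.
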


\begin{proof}
By definition $\mP\geq 0$. Given positively initialized matrices $\mF$ and $\mG$ it follows that the denominator in (\ref{eq:upF}) is positive at each multiplicative update step.  To prove that $J$ is monotonically decreasing, we define an auxiliary function $Z(\mH,\ \widetilde{\mH})$ as in \cite{DingOrt06}, so that $Z(\mH,\ \widetilde{\mH})\geq J(\mH)$ and $Z(\mH,\mH)=J(\mH)$ for all $\mH,\ \widetilde{\mH}$. Let $\mH^{(t+1)} = \argmin_{\mH} Z(\mH^{(t+1)},\mH^{(t)})$, from which it follows that $L(\mH^{(t)})=Z(\mH^{(t)},\mH^{(t)})\geq Z(\mH^{(t+1)},\mH^{(t)})\geq L(\mH^{(t+1)})$. Using the inequality $\sum_{i=1}^{m} \sum_{p=1}^k ((\mA(\mS'\mB))_{i,p}\mS_{i,p}^2)/(\mS_{i,p}')>\tr(\mS^{\intercal}\mA\mS\mB)$ proved in \cite{DingOrt06} for symmetric matrices $\mA\in \R^{n\times n}_{\geq 0}$ and $\mB\in \R^{k\times k}_{\geq 0}$ and matrices $\mS\in \R^{n\times k}_{\geq 0}$ and $\mS'\in \R^{n\times k}_{\geq 0}$, we show that $Z(\mF,\ \mF') = -\sum_{i,k}2(\mF^{\intercal}\mX\mG)_{i,k}-\sum_{i,k}2\lambda(\mA^{\intercal}\mF^{\intercal}\mP)_{i,k}+\lambda\sum_{i,k}(\mP^{\intercal}\mF\mF^{\intercal}\mP)+\sum_{i,k}\frac{\mF'(\mG^{\intercal}\mG)_{i,k}\mF_{i,k}^2}{\mF_{i,k}'}$ is an auxiliary function for $J$. Given $J'(\mF) = \tr(\lambda \mP^{\intercal}\mF\mF^{\intercal}\mP-2\lambda\mA^{\intercal}\mF^{\intercal}\mP-2\mF^{\intercal}\mX\mG+\mG^{\intercal}\mG\mF^{\intercal}\mF)$, it can easily be seen that $Z(\mF,\mF')=L'(\mF)$ for $\mF'=\mF$ and $Z(\mF,\mF')>L'(\mF)$, otherwise. Thus, the function $J$ is monotonically decreasing.

Next, we show that every limit point obtained using the proposed multiplicative updates is a stationary point of the strengthened optimization problem. Following the proof from \cite{GillisConvP}, given a limit point $(\xbar{\mF},\xbar{\mG})$ of a sequence $\{(\mF^k,\mG^k)\}$ and using the fact that $J$ is bounded from below, due to monotonicity, function $J$ converges to $\norm*{\mX-\bar{\mF} \bar{\mG}^{\intercal}}_{F}^2 +\lambda\norm*{\mA-\overline{\mF}^{\intercal} \mP}_{F}^2$. Also, $\overline{\mF}_{i,k} = \max(\varepsilon,\alpha_{i,k}\overline{\mF}_{i,k})$ for all $i,k$ where $\alpha_{i,k} = \overline{\mF}_{i,k} \frac{(\mX\overline{\mG}+\lambda \mP\mA^{\intercal})_{i,k}}{(\bar{\mF}\bar{\mG^{\intercal}}\bar{\mG}+\lambda \mP\mP^{\intercal}\overline{\mF})_{i,k}}$. Now $\bar{\mF}\bar{\mG^{\intercal}}\bar{\mG}>0$, so the update is well defined. By observing the stationarity conditions (KKT optimality conditions), it must be that $\overline{\mF}_{i,k}\geq \varepsilon$, $\alpha_{i,k}\leq 1$ and $(\overline{\mF}_{i,k}-\varepsilon)\cdot (\alpha_{i,k}-1) = 0$. The proposed updates preserve these properties.
\end{proof}

The derivation of the update rules and the corresponding convergence proof for $\mF$ from \eqref{eq:dnmf1}  follow the same logic as for $\mF$ from \eqref{eq:dnmf} thus they are omitted. Update rules for $\mF$ from \eqref{eq:dnmf1} are:

\begin{equation}
 \label{eq:upFO2}
\mF_{i,k}\leftarrow \max\left(\varepsilon,\ \mF_{i,k}\cdot \frac{(\mX\mG+\lambda \cdot \widetilde{\mF})_{i,k}}{(\mF\mG^{\intercal}\mG+\lambda\cdot \mF)_{i,k}}\right) \; .
\end{equation}

As we show in detail in the Appendix (Section A), the matrix $\mF$ from equation \ref{eq:dnmf} is susceptible to numerical compensations. The solution is to normalize $\mF$, which unfortunately introduces heavy optimization constraints.

The proposed solution is to combine methodologies presented in equations \ref{eq:dnmf1} and \ref{eq:dnmf}, which solves the problem of numerical compensations with manageable optimization constraints.
\begin{equation}
\label{eq:dnmf2}
\min_{\mF\geq 0,\ \mG\geq 0} \norm{\mX-\mF \mG^{\intercal}}_{F}^2 +\lambda \norm*{\mA - \mF^{\intercal}\mP+( \widetilde{\mF}-\mF)^{\intercal}\mP}_{F}^2\; .
\end{equation}
Or equivalently: 
\begin{equation}
\label{eq:dnmf3}
\min_{\mF\geq 0,\ \mG\geq 0} \norm{\mX-\mF \mG^{\intercal}}_{F}^2 +\lambda \norm*{\mA + ( \widetilde{\mF}-2\mF)^{\intercal}\mP}_{F}^2\; .
\end{equation}

To find the multiplicative update rules for matrix $\mF$ from \eqref{eq:dnmf2} and \eqref{eq:dnmf3}, we search for the local minima of the function:
\begin{equation}
  \label{eq:J1}
  J(\mF) = \norm{\mX-\mF \mG^{\intercal}}_{F}^2 +\lambda \norm*{\mA - \mF^{\intercal}\mP+( \widetilde{\mF}-\mF)^{\intercal}\mP}_{F}^2\;  .
\end{equation}
  This is achieved by computing $\frac{\partial J}{\partial \mF}(\mF) = -2\mX\mG + 2\mF\mG^{\intercal}\mG + \lambda(8\mP\mP^{\intercal}\mF - 4\mP\mA^{\intercal}-4\mP\mP^{\intercal}\widetilde{\mF})$ and solving $-2\mX\mG + 2\mF\mG^{\intercal}\mG + \lambda(8\mP\mP^{\intercal}\mF - 4\mP\mA^{\intercal}-4\mP\mP^{\intercal}\widetilde{\mF}) = 0$. From this and applying a rectifier function, we get:  

\begin{equation}
 \label{eq:upF3}
\mF_{i,k}\leftarrow \max\left(\varepsilon, \mF_{i,k}\cdot \frac{(\mX\mG+2\lambda \cdot (\mP\mA^{\intercal}+\mP\mP^{\intercal}\widetilde{\mF}))_{i,k}}{(\mF\mG^{\intercal}\mG+4\lambda\cdot \mP\mP^{\intercal}\mF)_{i,k}}\right)\; .
\end{equation}

The main benefit of this optimization function is: 
\begin{theorem} If $\mA\neq 0$ then
$\mA - \mF^{\intercal}\mP+( \widetilde{\mF}-\mF)^{\intercal}\mP = 0 \Leftrightarrow \widetilde{\mF}-\mF = 0$.
\end{theorem}

\begin{proof}
Assuming $\widetilde{\mF}-\mF = 0$, it follows: 
$\mA - \mF^{\intercal}\mP+( \widetilde{\mF}-\mF)^{\intercal}\mP = \mA - \mF^{\intercal}\mP$.
$\mP$ describes redescription supports and $\mF^{\intercal} = \widetilde{\mF}^{\intercal}$, thus $\mA - \mF^{\intercal}\mP = \mA - \widetilde{\mF}^{\intercal}\mP$. By definition, this equals $\mA - \mA = 0$.
\par \vspace{2mm}\noindent
Conversely, assume $\mA - \mF^{\intercal}\mP+( \widetilde{\mF}-\mF)^{\intercal}\mP = 0$. Suppose $(\widetilde{\mF}-\mF)^{\intercal}\neq 0$, more concretely $(\widetilde{\mF}-\mF)^{\intercal} = \mK$,  where $F_{i,j}\geq 0,\ K_{i,j}\in \mathbb{R}$. It must be: $\mA - \mF^{\intercal}\mP +\mK\mP = 0$. $\mA - \mF^{\intercal}\mP = -\mK\mP$. $\mA = -\mK\mP + \mF^{\intercal}\mP$. That is, $\mA = (-\mK+\mF^{\intercal})\mP$. Since $\mK + \mF^{\intercal} = \widetilde{\mF}^{\intercal}$, it must be: 
$\widetilde{\mF}_{i,j} = 1 \Rightarrow \mK_{i,j}+\mF_{i,j} = 1 \Rightarrow \mF_{i,j} = 1- \mK_{i,j}$. Since $\mF_{i,j}\geq 0 \Rightarrow \mK_{i,j}\leq 1$. $\widetilde{\mF}_{i,j} = 0 \Rightarrow \mF_{i,j} = - \mK_{i,j}$. Since $\mF_{i,j}\geq 0 \Rightarrow \mK_{i,j}\leq 0$.

Now $\mA = (-\mK+\widetilde{\mF}^{\intercal}-\mK)\mP = (-2\mK + \widetilde{\mF}^{\intercal})\mP$. Given observations stated above it follows: 
$-2\mK_{i,j}+\widetilde{\mF}_{i,j}<0$ if $\widetilde{\mF}_{i,j} = 1$ or $-2\mK_{i,j}+\widetilde{\mF}_{i,j}\leq 0$ if $\widetilde{\mF}_{i,j} = 0$. Given $\mP_{i,j}\geq 0$, it is easy to see that $((-2\mK + \widetilde{\mF}^{\intercal})\mP)_{i,j}\leq 0, \ \forall i,j$. This is a contradiction and thus $(\widetilde{\mF}-\mF)^{\intercal} = 0$.
\end{proof}

The convergence proof of update rules for $\mF$ from (\ref{eq:dnmf2}) follows the outline of the proof for $\mF$ from (\ref{eq:dnmf}), thus it is omitted. Approaches from equations \ref{eq:dnmf1} and \ref{eq:dnmf}  have similar optimization function to the semi-supervised NMF \cite{SSNMF}, that provides label information in the penalty term, and the concurrently developed approach of guided NMF \cite{GNMF,GuidedNMFF}, that provides information about the distribution of words (data attributes) in the predefined (selected) topics. Penalty terms in the proposed approaches represent rule-based constraints that describe NMF latent factors. Thus, matrices occurring in the proposed optimization functions have very different meaning and dimensions compared to semi-supervised and guided NMF approaches. Section \ref{sec:usecase} demonstrates using the proposed approach and data fusion to improve embedding quality. 

We presented approaches utilizing update rules for $\mF$ from \eqref{eq:upF}, further denoted $\DNMF{ind}$, that utilizes explicit information about the entity membership in support sets of rules and the input entity-factor cost matrix, from eq. \ref{eq:upFO2}, denoted $\DNMF{dir}$, which utilizes information about the final entity-factor assignment, and the combined approach denoted $\DNMF{comb}$ utilizing rules described in \eqref{eq:upF3}. Table \ref{tab:updrules} contains update rules for $\mF$ using gradient descent (GD) based approaches ($\DNMF{indG}$, $\DNMF{dirG},\ \DNMF{combG}$), GD with bold driver heuristic ($\DNMF{indGBD}$, $\DNMF{dirGBD},\ \DNMF{combGBD}$), the oblique ($\DNMF{indOB},\ \DNMF{dirOB}$, $\DNMF{combOB}$) and the HALS approaches ($\DNMF{indH},\ \DNMF{dirH},\ \DNMF{combH}$).  Optimization approaches for NMF are described in \cite{cichocki09nonnegative} and empirical comparison of several approaches in \cite{NMFSurv}. 

\begin{table}[tpb]
	\centering
	\caption{Update rules for different NMF approaches with rule-describable factors.}
	\label{tab:updrules}
	\begin{tabular}{@{}
			l%
			c%
			@{}}
		\toprule
		Approach & Update rules   \\
		\midrule
		(1) $\DNMF{indG}$ & $\mF = [(\mF - (\mF\mG^{\intercal}\mG - \mX\mG + (\mP\mP^{\intercal}\mF-$\\
		&  $\mP\mA^{\intercal}) \cdot \lambda) \cdot \gamma_F)]_{+}$ \\
		(2) $\DNMF{dirG}$ & $\mF = [(\mF - (\mF\mG^{\intercal}\mG - \mX\mG +(\mF-\widetilde{\mF})\cdot \lambda)$ \\
		& $\cdot \gamma_F)]_{+}$ \\
		(3) $\DNMF{combG}$ & $\mF = [(\mF - (\mF\mG^{\intercal}\mG - \mX\mG +(2\mP\mP^{\intercal}\mF-$ \\
		& $\mP\mA^{\intercal} -\mP\mP^{\intercal}\widetilde{\mF} )\cdot \lambda) \cdot \gamma_F)]_{+}$ \\
		(4) $\DNMF{indGBD}$ & as (1) with $\gamma_F\leftarrow \gamma_F\cdot \{\rho,\ \sigma\}$ \\ 
		(5) $\DNMF{dirGBD}$  & as (2) with $\gamma_F\leftarrow \gamma_F\cdot \{\rho,\ \sigma\}$ \\
		(6) $\DNMF{combGBD}$  & as (3) with $\gamma_F\leftarrow \gamma_F\cdot \{\rho,\ \sigma\}$ \\
		(7) $\DNMF{indOB}$ & $\mF_{k+1}^{\intercal} = [(\mF_{k}^{\intercal}-\eta_{k}\cdot(\mG^{\intercal}\mG\mF^{\intercal}-\mG^{\intercal}\mX^{\intercal}+$ \\
		&  $\lambda(\mF^{\intercal}\mP\mP^{\intercal}-\mA\mP^{\intercal})))]_{+}$ \\
		(8) $\DNMF{dirOB}$ & $\mF_{k+1}^{\intercal} = [(\mF_k^{\intercal} - \eta_{k}\cdot (\mG^{\intercal}\mG\mF^{\intercal}-\mG^{\intercal}\mX^{\intercal}+$ \\
		& $\lambda(\mF^{\intercal} - \widetilde{\mF}^{\intercal})))]_{+}$ \\
		(9) $\DNMF{combOB}$ & $\mF_{k+1}^{\intercal} = [(\mF_{k}^{\intercal}-\eta_{k}\cdot(\mG^{\intercal}\mG\mF^{\intercal}-\mG^{\intercal}\mX^{\intercal}+$ \\
		&  $\lambda(2\mF^{\intercal}\mP\mP^{\intercal}-\mA\mP^{\intercal}-\widetilde{\mF}^{\intercal}\mP\mP^{\intercal})))]_{+}$ \\
		(10) $\DNMF{indH}$ & $f_j = \frac{1}{f_j^{\intercal}f_j}[X^{(j)}g_j+\lambda(\mP a_j^{\intercal}-\mP\mP^{\intercal}f_j)]_{+}$ \\
		(11) $\DNMF{dirH}$ & $f_j =\frac{1}{f_j^{\intercal}f_j}[X^{(j)}g_j+\lambda(\widetilde{f}_j - f_j)]_{+}$\\
		(12) $\DNMF{combH}$ & $f_j = \frac{1}{f_j^{\intercal}f_j}[X^{(j)}g_j+\lambda(\mP a_j^{\intercal} + \mP\mP^{\intercal}\widetilde{f_j} $ \\
		& $-2\mP\mP^{\intercal}f_j)]_{+}$ \\
		\bottomrule
	\end{tabular} 
\end{table}

\section{Novel algorithm for rule clustering and assignment to NMF factors}
\label{sec:clust}

Rule-assignment to clusters can be done e.g.\ via $k$-means. When unsupervised rules are used or specific rules are selected as the subgroups, it is sufficient to use $k=\mathit{numFactors}$ and cluster rules based on the similarity of the entities they describe. When supervised rules are used, one needs to decompose $\mathit{numFactors}$ to $c$ groups (where $c$ equals number of different target labels). This step requires experimentation when $k$-means algorithm is used. 

In this section, we describe a clustering approach that alleviates  the need for factor decomposition and separate application of a clustering algorithm regardless of rule type. The main idea is to first assign $c$ factors with the largest entity Jaccard index to the most accurate rule (representative) of each class label, then assign the $\mathit{numFactors} - c$ factors to the rules, among remaining rules, that are the most dissimilar to the chosen representative rules of the first assigned $c$ factors  and have the largest entity Jaccard index with a chosen factor and then assign the remaining rules to factors based on the closeness of a given rule to the assigned centroid (representative rule) of a factor. Closeness is defined as a entity Jaccard index between the centroid rule and a rule for which assignment is required. Detailed description of this approach is provided in Algorithm \ref{alg:costumClustering}.

\begin{algorithm}[ht!]
\caption{Algorithm for rule assignment to factors}\label{alg:costumClustering}
\begin{algorithmic}[1]
\Require{Initial factors $F_{\mathrm{init}}$, number of factors $k$, rule set $\mathcal{R}$, number of different target class labels $c$}
\Ensure{Rule-factor assignment $\mathcal{C}$, final factors $F_{\mathrm{fin}}$}
\Procedure{RFA}{}
\If{($\mathit{ruleType}=\text{supervised}$)}
\For{($i=0;i< c;i{++}$)}
\State $R_{\mathrm{best},c}\leftarrow \operatorname{findBestRule}(c)$
\State $\mathcal{C}_s\leftarrow \mathcal{C}_s\cup \operatorname{assignRule}(R_{\mathrm{best},c},F_{\mathrm{init}})$, $s<k$
\EndFor
\Else
\State $R_{\mathrm{best}}\leftarrow \operatorname{findBestRule}()$
\State $\mathcal{C}_s\leftarrow  \mathcal{C}_s \cup  \operatorname{assignRule}(R_{\mathrm{best}},F_{\mathrm{init}})$,  $s<k$
\EndIf
\While{$\exists \operatorname{unassigned}(F_{\mathrm{init}})$}
\State $R_{\mathrm{far}}\leftarrow \min_{R\in \mathcal{R}\setminus \operatorname{centroids}(\mathcal{C})}($ \par
\hfill$\max_{R_i\in \operatorname{centroids}(\mathcal{C})}(\mathit{JS}(R,R_i)))$
\State $\mathcal{C}_s\leftarrow  \mathcal{C}_s \cup  \operatorname{assignRule}(R_{\mathrm{far}}, F_{\mathrm{init}}\setminus \operatorname{assigned}(F_{\mathrm{init}}))$,\par
\hfill $s\notin \operatorname{index}(\operatorname{assigned}(F_{\mathrm{init}}))$
\EndWhile
\For{$R\in \mathcal{R}\setminus \operatorname{assigned}(R)$}
\State $f_{\mathrm{init},i}\leftarrow \operatorname{factor}(\argmax_i JS(R, \operatorname{centroids}(\mathcal{C})_i))$
\State  $\mathcal{C}_i\leftarrow  \mathcal{C}_i \cup \operatorname{assignRule}(R,f_{\mathrm{init},i})$
\EndFor
\State $F_{\mathrm{fin}}\leftarrow \operatorname{computeFinalFactorSupports}(\mathcal{C})$
\State \textbf{return} $\mathcal{C}, F_{\mathrm{fin}}$
\EndProcedure
\end{algorithmic}
\end{algorithm} 

The initial factors $F_{\mathrm{init}}$ are derived from the randomly generated matrix $F$. These are used as a template to assign rules and changed later based on support sets of the assigned rules (alternative approach would be to initially assign selected rules to factors at random).
Function $\operatorname{findBestRule}()$ locates the most accurate rule given a rule-type metric (accuracy for supervised rules, SSE for descriptive rules, $\chi^2$ for subgroups and Jaccard index for redescriptions). Function $\operatorname{assignRule}(R, F)$ finds a factor $f\in F$ for a rule $R$ such that $\abs{\supp(R)\cap \supp(f)}/\abs{\supp(R)}$ is maximised. That is, entities described by a rule are mostly members of a factor $f$. Function $\operatorname{assignRule}(R, f_i)$ assigns a rule $R$ to factor $f_i$. This factors must contain a centroid that is the most similar with respect to a support set to a given rule $R$, that is $\mathit{JS}(R,\operatorname{centroids}(\mathcal{C})_i)$ is maximal. Function $\operatorname{computeFinalFactorSupports}()$ assigns entities to factors based on the assigned rules.

\section{Experiments}
\label{sec:experiments}

\hl{The proposed approaches are evaluated on fifteen different datasets. The main paper contains results for eight datasets, described in Table}~\ref{tab:datasets}. \hl{Problem choice is elaborated in Section D.1 and a full set of results in Section F of the Appendix. All references to alphabetical section numbers will refer to the appendix.}

\begin{table}[tpb]
  \centering
  \caption{Dataset details. {--} denotes unsupervised data.}
  \label{tab:datasets}
\begin{tabular}{@{}
  l%
  l%
  S[table-format=5.0,group-minimum-digits=4]%
  S[table-format=5.0,group-minimum-digits=4]%
  S[table-format=1.0]%
  r%
  @{}}
  \toprule
  name & code & {\#row} & {\#col} & {\#class} & ref.  \\
  \midrule
  Abalone & AB & 4177 & 7 &  2 & \cite{KaggleAbalone}\\
  Arrhythmia & AR & 452  & 106  & 2 &  \cite{Dua2019UCIML} \\
  Brain cancer & BRC & 130 & 54676 &  5 & \cite{KaggleBrainCancer} \\
  Nomao & NM & 34465 & 89 & 2 &  \cite{NOMAO}\\
  \ifappendix
  Parkinson's disease& PD & 252  & 457 &  2 & \cite{sakar2019comparative}\\
  Secom & SE & 1567 & 438 &  2 & \cite{Dua2019UCIML}\\
  \fi
  Sports articles & SA & 1000 & 57 &  2 &   \cite{hajj2019subjectivity}\\
  \ifappendix
   Heart disease & HD & 4238 & 15 & 2 &  \cite{Dua2019UCIML}\\
  Leukemia & LK & 64 & 22284 & 5 &  \cite{KaggleLeukemia}\\
  Breast cancer & BC & 569 & 30 &  2 & \cite{Dua2019UCIML} \\
  Wine & WN & 178 & 12 &   3 & \cite{Dua2019UCIML} \\
  \fi
  $4$News & $4$N & 400 & 800 &  {--} &\cite{MiettinenPhD}\\
  Gene functions & BO & 3475 & 4523 &  {--} &\cite{MihelcicBio}\\
  \ifappendix
  Bacterial phenotypes  & PH & 92 & 1230 &  {--} &\cite{Brbic}\\
  \fi
 World Countries& WC & 141 & 328 &  {--} &\cite{Trade}\\
  \bottomrule
\end{tabular} \end{table}

The goal of our experiments is to assess the trade-off between representation accuracy, descriptive accuracy, correspondence and sparsity achieved by the proposed approaches. We also present examples (Section~\ref{sec:exp:example}) to show how our model improves the interpretability of the results.

\subsection{Experimental setup}
Four types of rules were used in the experiments: classification rules, descriptive rules (as in rule-based conceptual clustering), subgroups and redescriptions (BO, PH and WC datasets contain two views, thus these are used only with redescriptions). Details on rule creation and data preprocessing can be found in \hl{Section C}.   

Initial $\mF$ and $\mG$ are random matrices with values in the $[0,1]$ interval, sampled from the uniform distribution. All occurrences of $0$ in these matrices were replaced with $0.1$. Ten different randomizations were used for each experiment. Obtained rules, after filtering, were used to create the rule-representing matrix $\mP$. Supervised rules were grouped by the target class and then each group was clustered using the $k$-means algorithm, with a maximum of $100$ iterations, into a predefined, potentially mutually different number of groups.  Descriptive rules, subgroups and redescriptions were grouped using the $k$-means algorithm into a predefined number of groups. \hl{Choice of the parameter $k$ is explained in Section~D.2}. Entities described by rules contained in the obtained groups constitute the entity cluster associated with the corresponding NMF latent factor.

Elements of $\mA$ were defined as $\mA_{i,j} = \abs*{C_i\cap \supp(r_j)}$.
For these experiments, we use the second version of Def.~\ref{def:DNMF}.
The correspondence  $\corr(f_i)$ between data representation and a set of provided rules for a factor $f_i$ is obtained by computing the Jaccard coefficient between  $C_{f_i}$ and $C_{f_i}^*$, the cluster associated to $f_i$ of the corresponding ideal matrix $\widetilde{\mF}$. NMF algorithms are evaluated using the \emph{data representation error} $100\cdot\norm*{\mX-\mF \mG^{\intercal}}_{F}/\norm*{\mX}_F$ and the \emph{constraint representation error} (also called the \emph{description error}) $100\cdot\norm*{\mF_c-\widetilde{\mF}}_{F}/\norm*{\widetilde{\mF}}_F$, where $\mF_c$ is the clustering assignment matrix obtained by the NMF approach. A pair of NMF algorithms ($\mathit{nmf}_1$, $\mathit{nmf}_2$) is compared using  the average pairwise difference in correspondence (ADC) between data representation and the provided rules:   $\sum_{i=1}^{k} \bigl(\corr(f_{i, \mathit{nmf}_1}) - \corr(f_{i, \mathit{nmf}_2})\bigr)/k \in [-1,1]$. We use $\mathit{nmf}_1 = \DNMF{ind}$ and $\mathit{nmf}_2$ is the algorithm to be evaluated. The rational behind these measures is explained in \hl{Section~E}.
We evaluated the performance of three proposed optimization functions ($\DNMF{dir}$, $\DNMF{ind}$, $\DNMF{comb}$).  We tested the performance of these functions when used in combination with the proposed algorithm for rule assignment to factors (Algorithm \ref{alg:costumClustering}). These approaches are denoted $\DNMF{dirA}$, $\DNMF{indA}$, $\DNMF{combA}$.

\hl{The proposed approaches were compared with the regular NMF (multiplicative updates) ($\NMF{MU}$), to get the estimate of data representation error without using the proposed regularizer, the NMF (multiplicative updates) guaranteeing a predefined sparseness level ($\NMF{sp}$) on $\mF$} \cite{Hoyer04Spars}\hl{, to assess the impact of sparseness constraints on data representation (since the proposed regularizer enforces sparseness as well as additional interpretability-related constraints), and the approach proposed by Slawski et al.} \cite{Slawski13NMFBC} \hl{($\NMF{bf}$) that factorizes initial matrix so that $\mG$ contains Boolean components, which are more interpretable than numeric components, to compare the quality of representation with the most related methodology from the interpretability viewpoint.} Sparseness constraint imposed on $\mF$, for the $\NMF{sp}$, equals the average row sparseness of the ideal matrix $\widetilde{\mF}$ for a given problem.

\sisetup{detect-weight=true,detect-inline-weight=math}

\begin{table}[tb]
	\centering
	\caption{Comparative results of different NMF approaches, using $10$ different random initializations, when supervised rules are used as constraints. $k$ contains the numbers of latent factors per class separated by semicolon; \#iters is the average number of iterations until convergence; $RE$ is  the average representation error; $DE$ is the average description error; $ADC$ is the average pairwise difference in correspondence.}
	\label{tab:res}
\begin{tabular}{@{}
  l%
  L%
  S[table-format=5.1,group-minimum-digits=4]%
  S[table-format=2.2]%
  S[table-format=3.2]%
  S[table-format=+1.2]%
  L%
  @{}}
  \toprule
  $\mathcal{D}$ & k & {\#iters} & {$\mathit{RE}$}  &  {$\mathit{DE}$} & {$\mathit{ADC}$} & \text{algorithm}  \\
  \midrule
AB & 2;3 & 11201.0 & 2.53 & 68.54 & 0.00 & \DNMF{ind}  \\
 & & 18174.0 & 2.63 & 71.93 & -0.02 & \DNMF{indA}\\
  & & 3108.1 & 3.00 & 79.49 &  0.00 & \DNMF{dir} \\
  & & 5281.0 & 2.75 & 88.70 & 0.00 & \DNMF{dirA}\\
  & & 9855.0 & 2.79 & \bfseries 66.77 & \bfseries -0.03 & \DNMF{comb}\\
  & &  14287.0 & \bfseries 1.93 & 152.08 &  0.22 & \NMF{MU} \\
  & & 123.7 & 94.14 & 144.56 &  0.14 & \NMF{sp} \\
  & & 834.0 & 21.62 & 161.72 &  0.28 & \NMF{bf} \\
  [.7ex] %
   AR & 7;13 & 2806.0 & 18.74 & 71.27 & 0.00 & \DNMF{ind} \\
    & & 4868.0 & 19.29 & 69.84 & -0.01 & \DNMF{indA}\\
 & &  2286.0 & 17.60 & \bfseries 29.13 &  \bfseries -0.30 & \DNMF{dir} \\
 & & 16788.1 & 17.60 & 30.13 & -0.27 & \DNMF{dirA}\\
 & & 4089.0 & 19.28 & 70.33 & -0.02 & \DNMF{comb}\\
  & & 28318.8 & \bfseries 11.18 & 133.62 & 0.43 & \NMF{MU}\\
 & & 47358.9 & 15.68 & 110.70 & 0.46 & \NMF{sp}\\
 & & 21.0 & 20.45 & 161.60 & 0.35 & \NMF{bf}\\[.7ex]  %
   BRC & 1;2;1;2;1 & 9999.0 & 6.49  & 69.12 & 0.00  & \DNMF{ind} \\
    & & 9999.0 & 6.49 & 65.45 & -0.05 & \DNMF{indA}\\
  & & 9999.0 & 6.48  & 82.68  &  0.05  & \DNMF{dir} \\
  & & 9999.0 & 6.48 & 75.40 & -0.03 & \DNMF{dirA}\\
  & & 9999.0 & 6.51 & \bfseries 60.99 & \bfseries -0.06 & \DNMF{comb}\\
 & & 7953.8 & \bfseries 6.48  & 113.11  & 0.27  & \NMF{MU} \\
 & & 10000.0 & 7.21 & 104.98 & 0.38& \NMF{sp}\\
  & & 17.0 & 7.73 & 97.48 & 0.10 & \NMF{bf} \\[.7ex]  %
   NM & 23;17 & 9999.0 & 8.32  & 66.73 & 0.00  & \DNMF{ind} \\
    & & 9580.4 & 5.16 & 68.39 & 0.15 & \DNMF{indA}\\
  & & 7382.4 & 8.00  & \bfseries 17.18  &  \bfseries -0.32  & \DNMF{dir} \\
  & & 9804.4 & 5.27 & 19.92 & -0.12 & \DNMF{dirA}\\
  & & 9999.0 & 9.46 & 66.98 & -0.02 & \DNMF{comb}\\
 & & 10000.0 & \bfseries 2.53  & 146.56  & 0.38  & \NMF{MU} \\
 & & 10000.0 & 7.01 & 110.13 & 0.43& \NMF{sp}\\
 & & 500.0 & 48.31 & 167.40 & 0.35& \NMF{bf}\\[.7ex]
  \ifappendix %
    PD & 3;3 & 9947.8 & 1.47 & 84.46  & 0.00 & \NMF{D} \\
  & & 23061.8 & 1.37 & 58.92 &  -0.34 & \NMF{DE} \\
 & & 50000 & 0.34 & 131.76 & 0.33 & \NMF{MU} \\
  & & 2.1 & 32.60 & 125.42 & 0.06 & \NMF{sp} \\
  & & 41 & 27.92 & 114.78 & 0.09 & \NMF{bf}\\[.7ex]  %
  SE &7;1& 15355.2 & 21.00 & 79.36  &0.00 & \NMF{D} \\
 & & 4460.0 & 13.53 & 39.28 &  -0.33 & \NMF{DE} \\
  & & 10688.0 & 8.60 & 115.47  & 0.21  &  \NMF{MU}\\
  & & 5677.6 & 37.94  & 107.21  & 0.26  & \NMF{sp} \\
  & & 194 & 29.74  & 136.32  & 0.12  & \NMF{bf} \\[.7ex]  %
  \fi %
  SA & 9;11 & 18761.0 & 4.77 & 73.35 & 0.00 & \DNMF{ind} \\
   & & 10425.0 & 4.56 & 68.41 & -0.02 & \DNMF{indA}\\
  & & 26419.8 & 4.48 & 18.98 &  \bfseries -0.40 & \DNMF{dir} \\
  & & 43839.4 & 3.85 & \bfseries 18.50 & -0.30 & \DNMF{dirA}\\
  & & 16646.0 & 5.12 & 71.90 & -0.01 & \DNMF{comb}\\
  & & 47317.8 & \bfseries 1.69 & 118.95 & 0.32 & \NMF{MU} \\
  & & 50000.0 & 2.61 & 110.06 &0.37 & \NMF{sp}  \\
  & & 49.0 & 29.24 & 110.23 & 0.26 & \NMF{bf}  \\[.7ex]  %
  \ifappendix %
  LK & 1;2;1;2;2 & 9999.0 & 3.73  & 67.20 & 0.00  & \DNMF{ind} \\
    & & 9999.0 & 3.72 & 63.84 & -0.03 & \DNMF{indA}\\
  & & 9999.0 & 3.72  & 89.68  &  0.09  & \DNMF{dir} \\
  & & 9999.0 & 3.71 & 85.10 & 0.08 & \DNMF{dirA}\\
  & & 9999.0 & 3.76 & 54.85 & -0.15 & \DNMF{comb}\\
 & & 9830.8 & 3.71  & 128.86  & 0.37  & \NMF{MU} \\
 & & 207.2 & 23.88 & 109.25 & 0.49& \NMF{sp}\\
  & & 7.0 & 4.84 & 140.16 & 0.38 & \NMF{bf} \\[.7ex]  %
  BC & 2;6 &15793.4 & 2.52 & 66.54 & 0.00 & \NMF{D} \\
  & & 47023.8 & 2.54 & 18.29 &  -0.35 & \NMF{DE} \\
 & & 26388.6 & 0.21 & 119.12 & 0.29  & \NMF{MU} \\
  & & 35182.5 & 2.28 & 106.79 & 0.41 & \NMF{sp} \\
  & & 70.0 & 24.46 & 123.99 & 0.36 & \NMF{bf} \\[.7ex]  %
 HD & 6;2 & 16318.0 & 4.92 & 63.24 & 0.00 & \DNMF{ind} \\
     & & 37556.6 & 3.26 & 61.97 & 0.00 & \DNMF{indA}\\
  & & 21214.0 & 4.95 & 20.17 &  -0.17 & \DNMF{dir} \\
  & & 43258.6 & 5.26 & 20.77 & -0.23 & \DNMF{dirA}\\
 	& & 19056.0 & 5.78 & 64.07 & -0.04 & \DNMF{comb}\\
 & & 32516.0 & 0.42 & 140.06  & 0.21 & \NMF{MU} \\
  & & 136.9 & 64.77  &  116.02  & 0.31 & \NMF{sp}\\
  & & 528.0 & 14.28  &  137.32  & 0.23 & \NMF{bf}\\[.7ex]  %
  WN & 3;1;2 & 20796.0 & 2.68 & 56.75 & 0.00 & \NMF{D} \\
 & &  28673.7 & 2.34 & 16.08 &  -0.20 & \NMF{DE} \\
 & & 27466.0 & 0.64 & 106.17 & 0.33 &  \NMF{MU}\\
 & & 34553.4 & 4.05 & 103.16  & 0.49 & \NMF{sp} \\
 & & 28 & 10.37 & 119.21  & 0.36 & \NMF{bf} \\
  \fi %
  \bottomrule
 \end{tabular}  \end{table} 

The NMF algorithms were run for \num{50000} iterations, except on the Nomao and the Brain Cancer dataset, where they were run for \num{10000} iterations, \hl{see Section~D.3 for more details}. The tolerance was set to $10^{-8}$. The number of latent factors used for each dataset (split by the target label value in case of supervised rules) and the values of the evaluation measures for all performed experiments are provided in Tables \ref{tab:res}, \ref{tab:res1}, \ref{tab:res2} and \ref{tab:res3}. All approaches except $\NMF{bf}$ (which computes the initial matrices internally) were computed using the same initial randomly generated matrices $\mF$ and $\mG$. Regularization parameter $\lambda$, used in the experiments, for the $\DNMF{ind}$ and $\DNMF{comb}$ approach is of the form $c\norm*{\mX}_F/\norm*{\mA}_F$ and for the $\DNMF{dir}$ of a form $c\norm{\mX}_F$, where $c\in \R_{\geq 0}$ \hl{(see Section~D.4)}. The exact values of $c$ and the average row sparseness (as defined in \cite{Hoyer04Spars}) of ideal matrices for the datasets \hl{are discussed in Section~C}. Experimental results were first obtained for the $\NMF{MU}$ and the $\NMF{sp}$, and the regularizer for the $\DNMF{ind}$ was chosen to obtain as accurate factor description as possible with a decent representation error ($<0.15\cdot\norm{\mX}_F$ or $<3\cdot RE_{\NMF{MU}}$). Regularizers for the $\DNMF{dir}$ and $\DNMF{comb}$  were chosen to dominate the performance results of $\DNMF{ind}$ or to get similar descriptive performance with as small as possible decrease in representation accuracy.

\subsection{Results}

\begin{table}[tb]
	\centering
	\caption{Comparative results of different NMF approaches, using $10$ different random initializations and descriptive rules as constraints. Column names are as in Table \ref{tab:res}.}
	\label{tab:res1}
\begin{tabular}{@{}
  l%
  S[table-format=2.0]%
  S[table-format=5.1,group-minimum-digits=4]%
  S[table-format=2.2]%
  S[table-format=3.2]%
  S[table-format=+1.2]%
  L%
  @{}}
  \toprule
  {$\mathcal{D}$} & {$k$} & {\#iters} & {$\mathit{RE}$}  &  {$\mathit{DE}$} & {$\mathit{ADC}$} & \text{algorithm}  \\
  \midrule
AB & 5 & 3357.4 &3.70  & 54.00 & 0.00 & \DNMF{ind} \\
 & & 1746.2 & 3.66 & 43.46 & -0.11 & \DNMF{indA}\\
& & 4419.0 & 3.67 & \bfseries 36.08 &  -0.14 & \DNMF{dir} \\
 & & 2145.9 & 3.32 & 36.81 & \bfseries -0.16 & \DNMF{dirA}\\
 & & 3685.3 & 3.87 & 54.42 & -0.00 & \DNMF{comb}\\
 & & 15396.0 & \bfseries 1.93  & 101.28  & 0.19 & \NMF{MU} \\
  & & 138.9 & 19.05 & 103.71 & 0.42 & \NMF{sp} \\
  & & 834.0 & 21.62 & 88.17 & 0.15 & \NMF{bf} \\[.7ex]  %
  AR & 20 & 15227.0 & 19.99 & 60.00  & 0.00 &\DNMF{ind} \\
   & & 13453.0 & 19.81 & 58.84 & 0.00 & \DNMF{indA}\\
 & & 5623.0 & 20.07 & \bfseries 16.71 &  \bfseries -0.28 & \DNMF{dir} \\
   & & 11118.8 & 20.54 & 20.15 & -0.28 & \DNMF{dirA}\\
    & & 13995.0 & 20.26 & 59.60 & -0.01 & \DNMF{comb}\\
  & & 22437.0 & \bfseries 11.16 & 98.10 & 0.48 & \NMF{MU}\\
 & & 50000.0 & 12.00  & 97.76  & 0.45 & \NMF{sp}\\
  & & 21.0 & 20.45  & 92.41  & 0.27 & \NMF{bf}\\[.7ex] %
   BRC & 40 & 9482.2 & 5.88  & 13.08 & 0.00  & \DNMF{ind} \\
    & & 9999.0 & 5.56 & 16.51 & 0.06 & \DNMF{indA}\\
  & & 9999.0 & 6.09  & 7.61  &  0.01  & \DNMF{dir} \\
  & & 9999.0 & 4.02 & 80.80 & 0.75 & \DNMF{dirA}\\
  & & 9543.8 & 6.53 & \bfseries 7.21 & \bfseries -0.01 & \DNMF{comb}\\
 & & 9999.0 & \bfseries 3.96  & 112.08  & 0.91  & \NMF{MU} \\
 & & 10000.0 & 4.08 & 108.64 & 0.93& \NMF{sp}\\
  & & 2.0 & 62.59 & 146.00 & 0.81 & \NMF{bf} \\[.7ex]  %
   NM & 40 & 9999.0 & 17.11 & 75.53  & 0.00 &\DNMF{ind}  \\
    & & 9999.0 & 15.21 & 74.83 & -0.01 & \DNMF{indA}\\
  & & 5486.0 & 11.03 & \bfseries 12.49 &  \bfseries -0.55 & \DNMF{dir} \\
    & & 3682.0 & 12.18 & 12.93 & -0.55 & \DNMF{dirA}\\
     & & 9999.0 & 18.05 & 74.36 & -0.02 & \DNMF{comb}\\
 & & 9999.0 & \bfseries 2.52 & 88.72 & 0.18 & \NMF{MU} \\
 & & 10000.0 & 3.75 & 73.96 & -0.07 & \NMF{sp} \\
  & & 860.0 & 42.21 & 78.65 & 0.01 & \NMF{bf} \\[.7ex]
  \ifappendix %
   PD & 40 & 20645.6 &  1.07 & 63.21 & 0.00 &\NMF{D} \\
  & & 49999.0 & 0.29 & 22.13 &  -0.28 & \NMF{DE} \\
 & & 30600.0 & 0.13 & 128.82  & 0.52 & \NMF{MU} \\
  & & 2.0 & 27.51 & 153.82 & 0.46 & \NMF{sp}\\
  & & 5.0 & 57.00 & 146.49 & 0.30 & \NMF{bf}\\[.7ex] %
 SE & 60 & 17702.8 & 6.07 & 87.90 & 0.00 &\NMF{D} \\
 & & 12760.2 & 3.06 & 52.76 &  -0.34 & \NMF{DE} \\
  & & 49999.0 & 0.84 & 115.38 & 0.21  & \NMF{MU} \\
 & & 50000.0 & 3.39 & 110.26 & 0.22 & \NMF{sp}\\
 & & 25.0 & 58.53 & 160.89 & 0.12 & \NMF{bf}\\[.7ex] %
  \fi %
  SA & 20 & 10663.0 & 5.42 & 70.71  & 0.00 &\DNMF{ind} \\
   & & 17545.0 & 4.51 & 66.70 & -0.06 & \DNMF{indA}\\
  & & 18965.8 & 5.35 & \bfseries 18.12 &  \bfseries -0.43 & \DNMF{dir} \\
  & & 14102.7 & 4.74 & 25.63 & -0.35 & \DNMF{dirA}\\
  & & 9402.0 & 6.07 & 70.21 & -0.01 & \DNMF{comb}\\
  & & 46318.8 & \bfseries 1.68 & 119.02 & 0.36 & \NMF{MU}\\
  & & 50000.0 &  2.57 & 110.75 & 0.43 & \NMF{sp}\\
  & & 49.0 &  29.24 & 128.66 & 0.39 & \NMF{bf}\\[.7ex] %
  \ifappendix %
    LK & 15 & 9999.0 & 3.09  & 81.00 & 0.00  & \DNMF{ind} \\
    & & 9999.0 & 3.10 & 78.00 & -0.01 & \DNMF{indA}\\
  & & 9999.0 & 3.05  & 82.83  &  -0.03  & \DNMF{dir} \\
  & & 9999.0 & 2.99 & 108.72 & 0.09 & \DNMF{dirA}\\
  & & 9999.0 & 3.28 & 62.88 & -0.20 & \DNMF{comb}\\
 & & 9999.0 & 2.97  & 141.00  & 0.28  & \NMF{MU} \\
 & & 10000.0 & 3.67 & 113.12 & 0.36& \NMF{sp}\\
  & & 3.0 & 4.58 & 166.24 & 0.26 & \NMF{bf} \\[.7ex]  %
  BC & 8 & 2671.0 & 8.97 &  64.09 & 0.00 & \NMF{D}\\
  & & 3684.6  & 7.05 & 38.60 &  -0.32 & \NMF{DE} \\
 & & 30085.2 & 0.19 & 103.05 & 0.32 & \NMF{MU}\\
 & & 50000.0 &  0.58 & 103.54  & 0.46 & \NMF{sp} \\
 & & 70.0 &  24.46 & 102.59  & 0.27 & \NMF{sp} \\[.7ex] %
 HD & 8 & 9791.0 & 7.51 & 71.47  & 0.00 &\DNMF{ind}  \\
    & & 13158.0 & 5.95 & 72.95 & 0.03 & \DNMF{indA}\\
  & & 7390.4 & 7.10 & 35.40 &  -0.31 & \DNMF{dir} \\
     & & 12052.3 & 6.62 & 29.15 & -0.32 & \DNMF{dirA}\\
      & & 12721.0 & 6.98 & 71.37 & -0.01 & \DNMF{comb}\\
 & & 33855.8 & 0.42 & 107.62 & 0.21 & \NMF{MU} \\
  & & 27371.4 & 13.57 & 104.04 & 0.39& \NMF{sp} \\
  & & 528.0 & 14.28 & 113.75 & 0.28& \NMF{bf} \\[.7ex] %
 WN & 8 & 8768.0 & 4.46 & 10.12 & 0.00 &\NMF{D} \\
 & & 11002.0 & 4.17 & 9.73 &  0.00 & \NMF{DE} \\
   & & 49931.2 & 0.43 & 84.50 & 0.62 & \NMF{MU}\\
   &  & 50000.0 & 0.41 &  84.68 & 0.63 & \NMF{sp} \\
 &  & 21.0 & 10.76 &  86.81 & 0.60 & \NMF{sp} \\[.7ex] %
  \fi %
  4N & 60 & 1500.0 & 50.55 & 64.40 & 0.00 &\DNMF{ind} \\
   & & 1023.0 & 47.57 & 62.92 & 0.04 & \DNMF{indA}\\
  & & 1193.2 & 50.00 & 43.32 &  -0.12 & \DNMF{dir} \\
  & & 1221.6 & 49.88 & \bfseries 41.14 & \bfseries -0.15 & \DNMF{dirA}\\
  & & 3963.0 & 50.56 & 64.73 & 0.00 & \DNMF{comb}\\
 & & 1089.0 & \bfseries 37.22 & 114.15 & 0.53 & \NMF{MU}\\
 & & 34277.9 & 41.64 & 121.17  & 0.53 & \NMF{sp}\\
  & & 5.0 & 67.74 & 267.12  & 0.47 & \NMF{bf}\\
  \bottomrule
\end{tabular} \end{table}

Results presented in Tables \ref{tab:res},  \ref{tab:res1}, \ref{tab:res2} and \ref{tab:res3} are divided into four parts; NMF performance when supervised,  unsupervised rules, subgroups or redescriptions are used as constraints.  $\DNMF{ind}$ with normalized $\mF$ showed bad performance due to additional constraints imposed on $\mF$ and was left out from this evaluation. Functions proposed in equations  \ref{eq:dnmf} and  \ref{eq:dnmf3} have almost identical empirical performance, thus we provide in depth comparison only of functions from equations \ref{eq:dnmf1} and \ref{eq:dnmf}.  Empirical experiments determined that $\DNMF{combA}$ requires very high regularisation to achieve satisfactory descriptive properties, causing bad representation accuracy. Thus, we omit this approach from the tables in the results section.

\begin{table}[tb]
	\centering
	\caption{Comparative results of different NMF approaches, using $10$ different random initializations, when subgroups are used as constraints. Column names are as in Table \ref{tab:res}.}
	\label{tab:res2}
\begin{tabular}{@{}
  l%
  S[table-format=2.0]%
  S[table-format=5.1,group-minimum-digits=4]%
  S[table-format=2.2]%
  S[table-format=3.2]%
  S[table-format=+1.2]%
  L%
  @{}}
  \toprule
  {$\mathcal{D}$} & {$k$} & {\#iters} & {$\mathit{RE}$}  &  {$\mathit{DE}$} & {$\mathit{ADC}$} & \text{algorithm}  \\
  \midrule
  AR & 20 & 2833.0 & 24.65 & 65.39  & 0.00 &\DNMF{ind} \\
   & & 3606.0 & 24.61 & 65.54 & 0.00 & \DNMF{indA}\\
 & & 12685.0 & 25.46 & \bfseries 2.70 &  \bfseries -0.43 & \DNMF{dir} \\
 & & 49999.0 & 25.64 & 3.54 & -0.43 & \DNMF{dirA}\\
 & & 13479.3 & 24.51 & 65.54 & 0.00 & \DNMF{comb}\\
  & & 6610.0 & \bfseries 20.24 & 72.49 & 0.10 & \NMF{MU}\\
 & & 26119.6 & 21.80  & 21.51  & -0.38 & \NMF{sp}\\
 & & 89.0 & 23.93  & 70.57  & 0.06 & \NMF{bf}\\[.7ex] %
     BRC & 25 & 7658.6 & 6.03  & 29.94 & 0.00  & \DNMF{ind} \\
    & & 8377.2 & 5.64 & 31.38 & 0.02 & \DNMF{indA}\\
  & & 9999.0 & 5.08  & 47.71  &  0.17  & \DNMF{dir} \\
  & & 9999.0 & 4.74 & 86.65 & 0.53 & \DNMF{dirA}\\
  & & 2684.0 & 6.82 & \bfseries 21.29 & \bfseries -0.04 & \DNMF{comb}\\
 & & 9999.0 & \bfseries 4.69  & 111.12  & 0.79  & \NMF{MU} \\
 & & 10000.0 & 4.95 & 106.60 & 0.85& \NMF{sp}\\
  & & 4.0 & 6.75 & 133.97 & 0.77 & \NMF{bf} \\[.7ex]  %
 SA & 20 & 3094.0 & 9.47 & 65.67  & 0.00 &\DNMF{ind} \\
  & & 10427.0 & 6.50 & 69.33 & 0.05 & \DNMF{indA}\\
  & & 163.0 & 18.53 & \bfseries 26.44 &  \bfseries -0.37 & \DNMF{dir} \\
  & & 5418.0 & 6.64 & 33.44 & -0.32 & \DNMF{dirA}\\
   & & 3144.0 & 9.47 & 65.69 & 0.00 & \DNMF{comb}\\
  & & 11104.0 & \bfseries 4.17 & 76.83 & 0.14 & \NMF{MU}\\
  & & 3.2 &  21.00 & 30.58 & -0.34 & \NMF{sp}\\
  & & 199.0 &  37.08 & 66.45 & 0.00 & \NMF{bf}\\[.7ex] %
  \ifappendix %
    LK & 15 & 9999.0 & 3.15  & 64.84 & 0.00  & \DNMF{ind} \\
    & & 9999.0 & 3.05 & 61.30 & 0.06 & \DNMF{indA}\\
  & & 9999.0 & 3.10  & 65.38  &  0.01  & \DNMF{dir} \\
  & & 9999.0 & 3.00 & 65.11 & -0.02 & \DNMF{dirA}\\
  & & 9999.0 & 3.31 & 49.61 & -0.17 & \DNMF{comb}\\
 & & 9999.0 & 2.97  & 99.66  & 0.35  & \NMF{MU} \\
 & & 10000.0 & 3.02 & 99.89 & 0.44 & \NMF{sp}\\
  & & 3.0 & 4.58 & 99.03 & 0.24 & \NMF{bf} \\[.7ex]  %
   HD & 8 & 3773.6 & 7.24 & 50.42  & 0.00 &\DNMF{ind}  \\
    & & 4059.0 & 6.79 & 48.90 & 0.02 & \DNMF{indA}\\
  & & 5103.6 & 6.55 & 29.42 &  -0.12 & \DNMF{dir} \\
 & & 18665.0 & 6.34 & 32.30 & -0.10 & \DNMF{dirA}\\
  & & 6171.0 & 6.96 & 50.05 & -0.00 & \DNMF{comb}\\
 & & 11560.0 & 2.55 & 134.26 & 0.40 & \NMF{MU} \\
  & & 120.4 & 77.48 & 119.86 & 0.47& \NMF{sp} \\
   & & 705.0 & 14.31 & 123.53 & 0.35& \NMF{bf} \\[.7ex] %
 WN & 8 & 6251.0 & 4.29 & 54.01 & 0.00 &\DNMF{ind} \\
  & & 7561.0 & 2.63 & 51.79 & -0.03 & \DNMF{indA}\\
 & & 7579.0 & 4.27 & 24.89 &  0.06 & \DNMF{dir} \\
  & & 10029.7 & 3.59 & 28.02 & -0.16 & \DNMF{dirA}\\
   & & 27444.0 & 0.63 & 120.47 & 0.50 & \NMF{MU}\\
   &  & 83.0 & 50.52 &  107.32 & 0.59 & \NMF{sp} \\
   &  & 28.0 & 10.37 &  131.59 & 0.54 & \NMF{bf} \\[.7ex] %
  \fi %
  \bottomrule
\end{tabular} \end{table}

Unconstrained NMF is the most accurate methodology with respect to the data representation (see Tables \ref{tab:res}, \ref{tab:res1}, \ref{tab:res2} and \ref{tab:res3}), however, it has a large constraint representation error and low correspondence between data representation and a set of provided rules.  All proposed approaches ($\DNMF{}$) create latent factors that are represented much better by the available set of rules compared to the NMF approaches that do not use constraints related to factor descriptions. The effect is large with the average increase in correspondence of \numrange{0.2}{0.3} for the supervised rules, \numrange{0.3}{0.5} increase for descriptive rules, and redescriptions  between the $\DNMF{ind}$ and NMF approaches not optimizing descriptive constraints. Subgroups are specific in that they mostly have large overlaps and very large supports. The approach with sparsity constraints performs well in such conditions and outperforms the $\DNMF{ind}$ approach on two datasets (although the $\DNMF{dir}$ approach outperforms the $\NMF{sp}$ on all datasets). We discovered that the $\DNMF{GBD}$ is especially efficient on SA when subgroups are used, with ADC of $-0.42$ and RE of $0.11$. Gradient descent based approaches are hard to tune and prone to divergence. HALS approaches are generally faster than multiplicative updates--based approaches with comparable performance. $\DNMF{indA}$ and $\DNMF{dirA}$ approaches outperform $\DNMF{ind}$ on majority of datasets (or offer different representation/description accuracy trade-off).
\begin{table}[H]
	\centering
	\caption{Comparative results of different NMF approaches, using $10$ different random initializations and redescriptions as constraints. Column names are as in Table \ref{tab:res}.}
	\label{tab:res3}
\begin{tabular}{@{}
  l%
  S[table-format=2.0]%
  S[table-format=5.1,group-minimum-digits=4]%
  S[table-format=2.2]%
  S[table-format=3.2]%
  S[table-format=+1.2]%
  L%
  @{}}
  \toprule
  {$\mathcal{D}$} & {$k$} & {\#iters} & {$\mathit{RE}$}  &  {$\mathit{DE}$} & {$\mathit{ADC}$} & \text{algorithm}  \\
  \midrule
AB & 5 & 8647.0 & 3.68  & 46.17 & 0.00 & \DNMF{ind} \\
 & & 9733.0 & 3.87 & 51.18 & 0.06 & \DNMF{indA}\\
& & 1353.0 & 3.43 & 47.02 &  0.04 & \DNMF{dir} \\
& & 1746.5 & 3.63 & \bfseries 39.51 & 0.03 & \DNMF{dirA}\\
& & 12440.0 & 3.48 & 46.22 & \bfseries -0.00 & \DNMF{comb}\\
 & & 14553.0 & \bfseries 1.94  & 104.90  & 0.43 & \NMF{MU} \\
  & & 13941.0 & 9.83 & 98.55 & 0.58 & \NMF{sp} \\
  & & 834.0 & 21.62 & 99.58 & 0.42 & \NMF{bf} \\[.7ex]  %
  AR & 10 & 1516.2 & 20.31 & 65.53  & 0.00 &\DNMF{ind} \\
   & & 11726.6 & 19.56 & 66.99 & 0.03 & \DNMF{indA}\\
 & & 4092.0 & 19.73 & \bfseries 31.34 &  \bfseries -0.18 & \DNMF{dir} \\
 & & 18975.1 & 18.68 & 37.48 & -0.05 & \DNMF{dirA}\\
 & & 1245.4 & 20.90 & 64.32 & -0.01 & \DNMF{comb}\\
  & & 12666.0 & \bfseries 15.48 & 115.25 & 0.43 & \NMF{MU}\\
 & & 34390.8 & 19.00  & 106.59  & 0.51 & \NMF{sp}\\
  & & 44.0 & 22.04  & 117.48  & 0.32 & \NMF{bf}\\[.7ex] %
  BRC & 50 & 9999.0 & 5.80  & 9.43 & 0.00  & \DNMF{ind} \\
    & & 9999.0 & 5.79 & 5.61 & -0.01 & \DNMF{indA}\\
  & & 5719.0 & 6.24  & \bfseries 0.00  & \bfseries -0.01  & \DNMF{dir} \\
  & & 9999.0 & 3.69 & 55.47 & 0.69 & \DNMF{dirA}\\
  & & 9999.0 & 6.26 & 9.40 & 0.00 & \DNMF{comb}\\
 & & 9999.0 & 3.57  & 102.60  & 0.92  & \NMF{MU} \\
 & & 10000.0 & \bfseries 3.56 & 102.86 & 0.92& \NMF{sp}\\
  & & 1.0 & 63.86 & 111.94 & 0.74 & \NMF{bf} \\[.7ex]  %
   NM & 10 & 9999.0 & 11.59 & 56.52  & 0.00 &\DNMF{ind}  \\
    & & 9250.6 & 9.06 & 58.80 & 0.12 & \DNMF{indA}\\
  & & 3650.0 & 10.55 & \bfseries 23.20 &  \bfseries -0.28 & \DNMF{dir} \\
  & & 7480.4 & 9.09 & 32.35 & -0.05 & \DNMF{dirA}\\
   & & 9999.0 & 10.98 & 56.02 & -0.00 & \DNMF{comb}\\
 & & 9999.0 & \bfseries 5.08 & 102.10 & 0.36 & \NMF{MU} \\
 & & 10000 & 7.10 & 101.15 & 0.49 & \NMF{sp} \\
  & & 3445.0 & 16.24 & 106.71 & 0.38 & \NMF{bf} \\[.7ex]
  \ifappendix %
   PD & 10 & 25023.0 &  1.02 & 58.87 & 0.00 &\NMF{D} \\
  & & 47052.4 & 1.07 & 28.92 &  -0.23 & \NMF{DE} \\
 & & 43808.0 & 0.29 & 105.58  & 0.40 & \NMF{MU} \\
  & & 2.1 & 21.64 & 110.60 & 0.31 & \NMF{sp}\\
  & & 24.0 & 26.30 & 99.90 & 0.31 & \NMF{bf}\\[.7ex] %
 SE & 10 & 6020.0 & 10.91 & 84.63 & 0.00 &\NMF{D} \\
 & & 22290.8 & 10.88 & 52.01 &  -0.38 & \NMF{DE} \\
  & & 14778.0 & 7.43 & 105.50 & 0.18  & \NMF{MU} \\
 & & 50000.0 & 9.16 & 103.02 & 0.20 & \NMF{sp}\\
  & & 155.0 & 29.13 & 110.32 & 0.08 & \NMF{bf}\\[.7ex] %
  \fi %
  SA & 10 & 12428.0 & 4.78 & 69.25  & 0.00 &\DNMF{ind} \\
   & & 19560.0 & 3.90 & 65.83 & 0.02 & \DNMF{indA}\\
  & & 16321.9 & 4.86 & \bfseries 29.79 &  \bfseries -0.37 & \DNMF{dir} \\
   & & 26522.4 & 4.32 & 29.84 & -0.26 & \DNMF{dirA}\\
   & & 15823.0 & 4.83 & 69.14 & -0.00 & \DNMF{comb}\\
  & & 28184.8 & \bfseries 3.01 & 106.58 & 0.31 & \NMF{MU}\\
 & & 50000.0 &  3.51 & 103.73 & 0.35 & \NMF{sp}\\
  & & 99.0 &  33.42 & 106.11 & 0.29 & \NMF{bf}\\[.7ex] %
  \ifappendix %
      LK & 50 & 5402.0 & 3.29  & 0.00 & 0.00  & \DNMF{ind} \\
    & & 7567.0 & 3.15 & 0.00 & 0.00 & \DNMF{indA}\\
  & & 9999.0 & 2.37  & 0.00  &  0.00  & \DNMF{dir} \\
  & & 9999.0 & 1.47 & 12.32 & 0.03 & \DNMF{dirA}\\
  & & 2944.0 & 3.78 & 0.00 & 0.00 & \DNMF{comb}\\
 & & 9999.0 & 1.17  & 112.69  & 0.89  & \NMF{MU} \\
 & & 10000.0 & 1.10 & 109.52 & 0.92& \NMF{sp}\\
  & & 1.0 & 75.49 & 146.16 & 0.76 & \NMF{bf} \\[.7ex]  %
  BC & 8 & 3252.0 & 8.04 &  64.07 & 0.00 & \NMF{D}\\
  & & 1821.0  & 8.18 & 24.21 &  -0.36 & \NMF{DE} \\
 & & 24238.4 & 0.21 & 97.12 & 0.28 & \NMF{MU}\\
  & & 50000.0 &  0.34 & 102.98  & 0.45 & \NMF{sp} \\
  & & 70.0 &  24.46 & 98.52  & 0.24 & \NMF{bf} \\[.7ex] %
   HD & 8 & 23051.0 & 5.95 & 68.44  & 0.00 &\DNMF{ind}  \\
    & & 19235.0 & 5.54 & 67.56 & -0.01 & \DNMF{indA}\\
  & & 11175.0 & 5.80 & 23.72 &  -0.39 & \DNMF{dir} \\
  & & 14099.8 & 5.96 & 21.59 & -0.41 & \DNMF{dirA}\\
  & & 21310.0 & 5.54 & 67.79 & -0.01 & \DNMF{comb}\\
   & & 32854.0 & 0.42 & 94.92 & 0.17& \NMF{MU} \\
 & & 40000.8 & 7.25 & 97.90 & 0.30 & \NMF{sp} \\
  & & 528.0 & 14.28 & 90.86 & 0.18& \NMF{bf} \\[.7ex]
 WN & 8 & 15358.0 & 3.73 & 61.27 & 0.00 &\NMF{D} \\
 & & 21560.0 & 3.61 & 8.35 &  -0.35 & \NMF{DE} \\
   & & 49462.2 & 0.42 & 100.50 & 0.37 & \NMF{MU}\\
   &  & 50000.0 & 0.72 &  97.16 & 0.40 & \NMF{sp} \\
 &  & 21.0& 10.76 &  82.89 & 0.13 & \NMF{bf} \\[.7ex] %
  \fi %
  4N & 60 & 1531.0 & 52.27 & 86.41 & 0.00 &\DNMF{ind} \\
   & & 2461.0 & 51.74 & 76.41 & -0.07 & \DNMF{indA}\\
  & & 1171.7 & 51.50 & 47.94 &  -0.35 & \DNMF{dir} \\
  & & 994.4 & 51.61 & \bfseries 46.50 & \bfseries -0.36 & \DNMF{dirA}\\
  & & 1969.0 & 53.85 & 83.96 & -0.04 & \DNMF{comb}\\
 & & 1445.0 & \bfseries 37.12 & 111.30 & 0.27 & \NMF{MU}\\
& & 37727.6 & 41.25 & 116.25  & 0.27 & \NMF{sp}\\
& & 5.0 & 67.74 & 237.56  & 0.20 & \NMF{bf}\\[.7ex]
WC & 20 & 31800.6 & 0.75 & 54.26 & 0.00 &\DNMF{ind} \\
 & & 30194.6 & 0.80 & 51.40 & 0.00 & \DNMF{indA}\\
  & & 39822.8 & 1.06 & 49.81 &  \bfseries -0.05 & \DNMF{dir} \\
  & & 49999.0 & 1.24 & \bfseries 43.48 & -0.03 & \DNMF{dirA}\\
  & & 13011.0 & 1.12 & 52.89 & -0.02 & \DNMF{comb}\\
 & & 5085.0 & \bfseries 0.61 & 118.37 & 0.66 & \NMF{MU}\\
& & 15307.2 & 31.23 & 163.32  & 0.45 & \NMF{sp}\\
& & 6.0 & 4.40 & 109.05  & 0.67 & \NMF{bf}\\[.7ex]
\ifappendix %
PH & 30 & 186.2 & 58.20 & 61.07 & 0.00 &\NMF{D} \\
  & & 212.2 & 63.10 & 38.25 &  -0.19 & \NMF{DE} \\
 & & 139.0 & 35.75 & 102.98 & 0.46 & \NMF{MU}\\
& & 12794 & 40.96 & 110.30  & 0.40 & \NMF{sp}\\
& & 2.0 & 44.87 & 110.45  & 0.38 & \NMF{bf}\\[.7ex]
\fi
BO & 30 & 49999.0 & 5.68 & 71.34 & 0.00 &\DNMF{ind} \\
 & & 49999.0 & 5.54 & 65.94 & 0.22 & \DNMF{indA}\\
  & & 43196.0 & 6.10 & 14.15 &  \bfseries -0.32 & \DNMF{dir} \\
  & & 49999.0 & 5.83 & \bfseries 13.77 & -0.12 & \DNMF{dirA}\\
  & & 49999.0 & 5.89 & 72.07 & -0.10 & \DNMF{comb}\\
 & & 16694.0 & \bfseries 5.45 & 184.66 & 0.36 & \NMF{MU}\\
& & 338.6 & 28.58 & 112.21  & 0.39 & \NMF{sp}\\
& & 114.0 & 6.99 & 211.77  & 0.34 & \NMF{bf}\\
  \bottomrule
\end{tabular} \end{table}

The approach $\DNMF{dir}$ achieves additional \numrange{0.2}{0.3} gain compared to the $\DNMF{ind}$. This means that if the regular NMF creates latent factors that have around \SI{10}{\percent} entities in common with the associated set of rules, using our approach, loosing some accuracy (mostly \SI{<10}{\percent} of $\norm{\mX}_{F}$), the domain expert can gain factors that have \SIrange{60}{80}{\percent} entities in common with these rules. This allows performing scientific analyses and interpreting the obtained factors. We notice a serious effect of sparseness-related constraints on representation accuracy, however the  proposed approaches mostly manage to balance between these constraints and representation/descriptive accuracy to achieve better results in descriptive and comparable or better results in representation aspect. Methodology provides better correspondence when descriptive (unsupervised) rules and redescriptions are used. Descriptive rule sets are larger with higher rule diversity and larger deviation in support set sizes. They offer better part-based representation than predictive rules with large support sets or subgroups with very large support sets.

Factor description accuracy plots for a subset of datasets (Figure \ref{fig:factAcc}) display the correspondence of factors, obtained using different NMF approaches, with a predefined set of rules. We chose two datasets (Arrythmia and Sports articles) that allowed computing all four types of rules. It can be seen that $\DNMF{dir}$ contains a large number of highly accurate factors, however, it also produces small amount of factors whose accuracy varies greatly between different runs. $\DNMF{ind}$ mostly creates  factors with average or high correspondence with rule descriptions and has small variability between runs (descriptive accuracy can be increased on majority of datasets using higher regularization at the expense of representation accuracy). $\NMF{bf}$ has worse descriptive performance than the proposed approaches and similar or worse representation error. $\NMF{sp}$ performs well when subgroups are used as constraints. This probably occurs due to very large support sets of discovered subgroups, thus assigning all entities to all factors yields very good results.  

\begin{figure*}[tb!]
  \centering
  \includegraphics{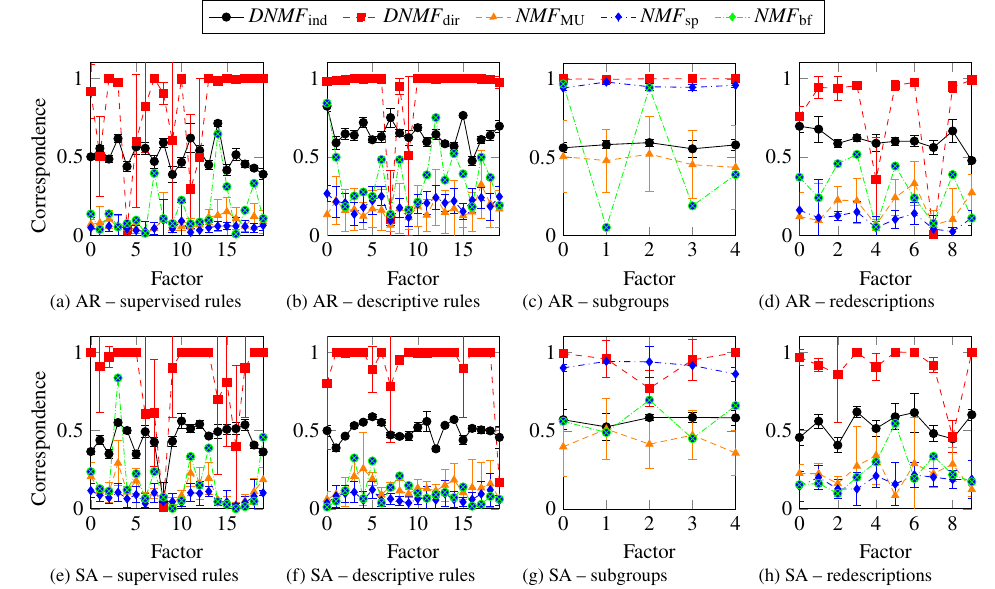}
  \caption{Factor correspondence accuracy achieved by the $\DNMF{ind}$, $\DNMF{dir}$, $\NMF{MU}$ and $\NMF{sp}$ approaches on three different datasets using supervised rules (above) and descriptive rules (below).}
\label{fig:factAcc}
\end{figure*}

\subsection{Example factor descriptions}
\label{sec:exp:example}

In this section, we show several examples of factor descriptions and explain the relation between latent factors and their associated rules. Several factor descriptions are presented in Table \ref{tab:desc}. Possible rule types (r.t.) are: supervised rules (S), descriptive rules (D), redescriptions (R) and subgroups (Sbg).

\begingroup
\setlength{\tabcolsep}{2.5pt}
\begin{table}[tb!]
\caption{Factor descriptions obtained on the Abalone (AB) and Sports Articles (SA) dataset one the fist run. Columns are factor id ($f$), rule type (r.t.), correspondence value ($\corr$) written in the format ($\corr_{\DNMF{ind}},\ \corr_{\DNMF{dur}}$), the factor description (Description) and the rule id ($r$).}
\label{tab:desc}
\begin{tabular}{@{}lrcclr@{}}
  \toprule
  $\mathcal{D}$ & $f$ & r. t. & $\corr$ &  Description & $r$ \\
  \midrule
AB & $4$ &S & $(0.73,\ 0.12)$ & \texttt{ShWeight} $\leq 0.2$  & $r_0$   \\[2mm]
  & $1$ & D  & $(0.68,\ 0.96)$ & \texttt{Diameter} $> 0.4\ $  & $r_1$ \\
 & &  &  & $\wedge\ $ \texttt{ShWeight} $\leq 0.4$ & \\
 & &  &  & $\wedge\ $ \texttt{ShcWeight} $> 0.2$ & \\[2mm]
   &  &  &  & \texttt{ShcWeight} $> 0.2\ $  & $r_2$ \\
 & &  &  & $\wedge\ $ \texttt{Diameter} $> 0.4$ & \\[.7ex]
  & $2$ & R  & $(0.61,\ 0.94)$ & $q_{0,0}:\ \neg ( 0.4\leq$ \texttt{ShWeight} $\leq 1$  & $R_0$ \\
  & &  &  & $\wedge\ 0.1\leq$ \texttt{Height} $\leq 0.2)$\\
   &  &  &  & $\vee\ ( 0\leq$ \texttt{ShWeight} $\leq 0.4\ $  &  \\
   &  &  &  & $\wedge\  0\leq$ \texttt{Height} $\leq 0.2)$  \\[1mm]
   &  &  &  & $q_{1,0}:\ 0.1\leq $ \texttt{Diameter} $\leq 0.5$  &  \\
 & &  &  & $\wedge\ 0\leq$ \texttt{ShcWeight} $\leq 0.5$ & \\
  & &  &  & $\wedge\ 0\leq$ \texttt{WhWeight} $\leq 1.1$  & \\
    &  &  &  & $\wedge\ \ 0\leq $\texttt{VisWeight} $\leq 0.2\ $  &  \\
    & &  &  & $ \wedge\ 0.1\leq$ \texttt{Length} $\leq 0.6$ & \\[.7ex]
   SA &  $13$ &   S & $(0.47,\ 1.0)$  &  \texttt{NNPS} $> 42$ & $r_3$ \\
& $0$ & D & $(0.49,\ 1.0)$  & \texttt{POS} $\leq 18\ \wedge$ \texttt{FW} $\leq 42$ & $r_4$ \\
& &  &   & $ \wedge$ \texttt{pronouns1st} $> 0.0$ & \\
& &  &   & $ \wedge$ \texttt{totalWordsCount} $\leq 400$ & \\[2mm]
& &  &   & \texttt{Quotes} $\leq 22$ & $r_5$\\
& &  &   & $\wedge$ \texttt{pronouns1st} $> 0$ & \\
& &  &   & $ \wedge$ \texttt{semanticsubjscore} $\leq 4$ & \\[.7ex] 
& $1$ & Sbg & $(0.40,\ 1.0)$  &  $0 \leq$\texttt{questionmarks} $\leq 1$ & $r_6$\\
& &  &   & $ \wedge$ \texttt{ellipsis} $= 0$ & \\
& &  &   & $ \wedge$ \texttt{TOs} $= 0\ \wedge\ $ \texttt{JJS} $= 0$ & \\[.7ex] 
& $0$ & R & $(0.48,\ 1.0)$  & $q_{0,1}:\ 29 \leq$ \texttt{past} $\leq 220$ & $R_1$\\[1mm]
& &  &   & $q_{1,1}:\ 0 \leq $ \texttt{colon} $\leq 4 \ \wedge\   $  & \\
& &  &   & $22 \leq$ \texttt{INs} $\leq 89\ \wedge\ $  & \\
& &  &   & $ 107 \leq$ \texttt{MD} $\leq 315$ $\wedge\  $  & \\
& &  &   & $29 \leq$ \texttt{VB} $\leq 104$ & \\
  \bottomrule
\end{tabular}
\end{table}
\endgroup

All rules that form factor descriptions, presented in the used examples, have a form of monotone conjunctions, redescriptions also contain negations and disjunctions. Presented factors have a correspondence at least $0.4$ with their set of rules, with the exception of $\DNMF{dir}$ on Abalone (this is the example of factor where $\DNMF{ind}$ outperforms the $\DNMF{dir}$).
All factors are described with one or two rules (redescriptions). Factor $f_1$ obtained on Abalone dataset with descriptive rule-based constraints, is described with two rules: $r_1$ and $r_2$, which makes its full description $r_1\ \vee\ r_2$. The relation between $f_1$ and these rules, for both methods, is depicted in Figure \ref{fig:factDesc}. $\DNMF{ind}$ produces factors with precision $0.96$ ($71$ entities are associated to $f_1$ that are not described by corresponding rules) and recall $0.70$ ($837$ entities are described by $r_1\vee r_2$ but are not associated with $f_1$). $\DNMF{dir}$ creates highly accurate factor with precision $0.996$ ($10$ wrongly assigned entities to $f_1$) and recall $0.96$ ($103$ entities not assigned to $f_1$).  %

\begin{figure}
\begin{subfigure}{.20\textwidth}
 \includegraphics[width=\linewidth]{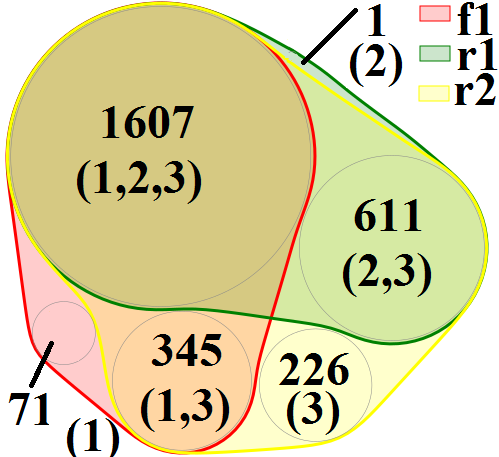}
\end{subfigure}\hspace{1cm}%
\begin{subfigure}{.2\textwidth}
 \includegraphics[width=\linewidth]{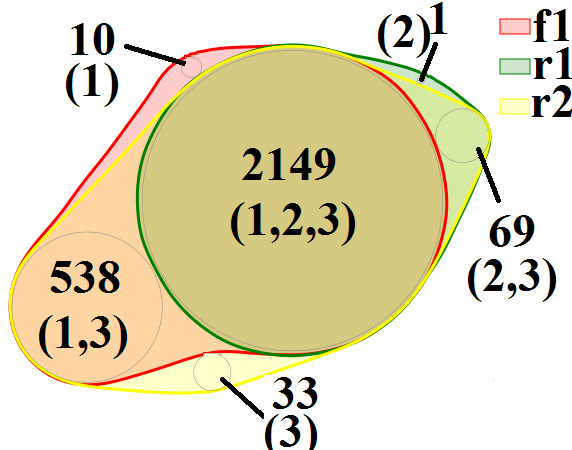}
\end{subfigure}
\caption{Relation between factor $f_1$ and the rules $r_1,\ r_2$ for the $\DNMF{ind}$ method (left) and the $\DNMF{dir}$ method (right).}
\label{fig:factDesc}
\end{figure}

Descriptions of obtained factors can be analysed further. For instance, supervised rules describing factor $f_{12}$, $r_{f_{12},1}\vee r_{f_{12},2}\vee r_{f_{12},3}$, where $r_{f_{12},1} = \text{heartRate} \leq 55 $, $r_{f_{12},2} = \text{heartRate} \leq 59\ \wedge\ \text{RWaveDII} \leq 8.9 $ and $r_{f_{12},3} = \text{heartRate}\leq 57\ \wedge\ \text{SwaveAVR}\leq 0\ \wedge\ \text{sex} = 1$, obtained on the Arrhythmia dataset describe the heart condition of $38$ individuals from which five  are diagnosed as normal and $33$ as not normal. It can be seen from the rules that the heart rate of the described subjects is below normal. This condition, called bradycardia can be found in young and healthy adults (especially athletic) but there also exist a serious condition called the sick sinus syndrome. Indeed, a majority $22$ patients contained in this group are diagnosed with sinus bradycardia, one  with left and four  with right bundle branch block, two  with ischemic changes and four  with other unspecified conditions.  

Descriptive rules do not use target label information and are not necessarily highly homogeneous with respect to the target label. This applies to $r_{A_{dsc}} = \text{RWaveV4} \leq 17.1\ \wedge\ \text{RWaveV3}\leq 12.9 \ \wedge\ \text{RWaveV2}\leq 6.7\ \wedge\ \text{RWaveV1} >24.0$, describing $70$ subjects with normal heart condition and $47$ with not normal condition. This descriptive rule is one of the rules describing factor $f_0$ on the Arrhythmia dataset.

\section{Related Work}
\label{sec:related-work}

Non-negative matrix factorization \cite{paatero94nmf,lee99nmf} was primarily designed to enable obtaining part-based representations in different tasks, especially where non-negativity was in a problem nature. Various tasks in image and signal processing, different problems in biology, bioinformatics, pharmacy, physics, medicine etc.\ contain non-negative data and have been analyzed using the NMF (see \cite{cichocki09nonnegative} and references therein). It was argued that non-negativity is important in human perception and as it turns out non-negative factors are often easier to understand and interpret than factors containing negative values \cite{pascual2006bionmf,Karthik08NMFCB}. NMF is now a mature research field with multiple developed optimization procedures, loss functions and different regularizers specifically suited for different tasks \cite{cichocki09nonnegative}. 

Among these regularizers, sparseness constraints \cite{Hoyer04Spars} and ortogonality constraints \cite{DingOrt06} significantly increase overall interpretability of the NMF-produced latent factors (see \cite{Karthik08NMFCB}). The main advantage of the sparseness constraints, with respect to interpretability, is that it allows reducing the number of non-zero elements used to represent the data. Orthogonality constraint allows obtaining clear clustering assignments equivalent to that obtainable by the $k$-means clustering algorithm. Although very useful, orthogonality constraints are often to strict and sparseness constraints can still be hard to understand because they contain non-negative real values in the constructed matrices. Slawski et al. \cite{Slawski13NMFBC} propose the NMF approach in which the basis elements are constrained to be binary. This allows detecting overlapping clusters and it provides latent factors with binary values, which are naturally easy to understand. The closest variant of NMF to the present work is the guided NMF \cite{GNMF}; see Section \ref{sec:algorithms} for more in-depth discussion.

The Boolean matrix factorization with background knowledge \cite{TRNECKA2022108261} utilizes user-provided weights to obtain factorization involving factors relevant to the domain expert. Orthogonal approach of verifying embeddings with hybrid logic \cite{shakya2023verification}, allows specifying and verifying different properties of deep learning embeddings using hybrid logic. Both approaches have very different goals to the proposed approach that obtains latent factors described by a set of rule-like objects. 

\section{Use case}
\label{sec:usecase}

Our use case example demonstrates the benefits of using the proposed methodology in a task of gene function prediction \cite{SurveyGFP}. This very important task in computational biology includes developing techniques, representations and using machine learning algorithms with the goal of obtaining more accurate predictions of gene functions. The ultimate goal is to understand the role, importance and impact of genes and gene groups on the functioning and structure of the organism.  

The task is being tackled using a multitude of different approaches that use data describing different aspects of organisms containing genes of interest \cite{SurveyGFP}. There is also a competition aimed at developing better approaches to solve this task \cite{Cafa}. We focus on one sub-area that aims to develop computational models for gene function prediction using information about genomes of different organisms \cite{SurveyGFP}. 

Our datasets contain $3475$ Clusters of Orthologous Groups (COGs) and Non-supervised Orthologous Groups (NOGs) \cite{COG}. These are groups of related genes that are known to share many functions. Gene functions are organized in a hierarchy known as Gene Ontology (GO) \cite{GO}. Each gene is assigned a number of functions (GO-terms), forming a hierarchical multi-label classification problem \cite{HMC}. Our datasets contain $1047$ different gene functions.

We use three different representations for gene function prediction using genomes of $1669$ prokaryotic organisms. The first representation called the \emph{phyletic profiles} \cite{VidulinComp} describes OGs by their membership in different bacterial organisms ($2731$ Boolean features). The \emph{gene neighbourhood} \cite{Neighbourhoods} approach uses information about physical distances of OGs in genomes of different bacterial organisms ($3475$ numerical features) and the \emph{neighbourhood function profiles} \cite{MihelcicBio} measure the average occurrence of gene functions in a neighbourhood of each OG throughout genomes of different bacteria ($1048$ numerical features). Thus, we have three different data sources that normally constitute three different approaches for predicting gene functions using information about genomes of different organisms. 

The proposed approach retains benefits such as dimensionality, sparsity reduction and non-negativity often found useful in prediction tasks where interpretation is also desirable. We will show that additionally, it can: a) achieve data fusion in an interpretable manner, b) produce representation that allows creating classifiers with better prediction of a number of gene functions compared to the original \emph{phyletic profiles} approach. This is achieved by utilizing information obtained from all three data sources. This is important since such, potentially complementary approaches can be used in synergy to achieve overall, globally better predictions \cite{VidulinComp, MihelcicBio} than achievable by individual components. 

We created $10$ non-negative  $\NMF{MU}$ and $\DNMF{dir}$ representations of the phyletic profiles, containing $30$ factors. Different random initialization is used in each run. Redescriptions obtained on gene neighbourhood and neighbourhood function profiles are used as constraints to create the $\DNMF{dir}$ representation of the phyletic profiles of the $3475$ OGs. We aim to extract knowledge that is supported by both sources, thus getting more robust knowledge. Using different types of rules allows obtaining different representations targeting different aspects of the problem.  

The phyletic profiles representation is sparse containing Boolean values, thus both representations have relatively high representation errors $RE_{\NMF{MU}} = 43.2\pm 0.05$ and $RE_{\DNMF{dir}} = 49.2\pm 0.5$, but the $\DNMF{dir}$ has significantly higher correspondence $0.62\pm 0.04$ compared to $0.04\pm 0.01$ obtained by the $\NMF{MU}$. We conclude that $\DNMF{dir}$ contains knowledge from all three data sources whereas the $\NMF{MU}$ contains only knowledge from phyletic profiles. 

Finally, we compare the number of functions per run for which we measure improvement in the AUPRC score \cite{AUPRC} of the Predictive Clustering tree algorithm \cite{HMCPCT} of at least $0.05$ using features obtained from $\NMF{MU}$ and $\DNMF{dir}$ compared to the performance this algorithm achieved using original phyletic profiles representation (See Figure \ref{fig:usecase}).  Since both NMF representations contain equivalent number of features, these are directly comparable. The difference in the number of improved functions is significant according to the one-sided Wilcoxon signed-rank test \cite{wilcoxon} ($p = 0.00256$).

\begin{figure}
  \centering
  \includegraphics{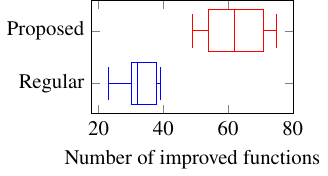}
  \caption{Number of improved functions across $10$ different runs compared to phyletic profile approach by $\NMF{MU}$ (bottom) and $\DNMF{dir}$ (top).}
\label{fig:usecase}
\end{figure}

The proposed approach also offers potential to use described data fusion in an interpretable manner. The factor $f_{13}$ of the obtained representation using $\DNMF{dir}$ is described with redescription $R_{f_{13}} = (q_{1,f_{13}}, q_{2,f_{13}})$, where $q_{1,f_{13}} = \  \neg (10.69\leq \text{NOG091082}\leq 25.0\ \wedge\ 5.65\leq\text{COG2965}\leq 25.0\ \wedge\ 11.49\leq \text{COG0050}\leq 14.5\ \wedge\ 9.72\leq \text{COG2882}\leq 25.0\ \wedge\ 11.98\leq \text{COG1344}\leq 14.29\  \wedge\ 5.3\leq \text{COG0806}\leq 14.31)$ and
$q_{2,f_{13}} = 0.0\leq \text{GO0046483}\leq 2.82\ \wedge\ 1.83 \leq \text{GO0043228}\leq 5.88$. $R_{f_{13}}$ describes $f_{13}$ with maximum correspondence $1.0$ which means that both queries are valid for all OGs that correspond to factor $f_{13}$. From these queries, one obtains knowledge about gene and functional composition of neighbourhoods of described OGs. But, there is a third information, that of procaryotic organisms that are associated with $f_{13}$. There are in total $18$ organisms such that their corresponding attributes contribute towards $f_{13}$ with intensity larger than $0.1$. Among these, there are $4$ strains of \emph{Buchnera} genus (Gram-negative bacteria that is a symbiont of aphids), $2$ strains of \emph{Bordetella} (Gram-negative bacteria that can infect humans), \emph{Wigglesworthia glossinidia} (Gram-negative bacteria that is endosymbiont of tsetse fly), a strain of \emph{Thermodesulfovibrio} (Gram-negative bacteria that is able to reduce  sulfate, thiosulfate or sulfite with a limited range of electron donors, found in hot springs), $5$ strains of \emph{Burkholderia} (Gram-negative, aerobic, rod-shaped, motile bacteria), \emph{Ammonifex degensii} (Gram-negative bacteria isolated from volcanic hot springs), \emph{Candidatus Desulforudis} (the only example of Gram-positive bacteria, sulfate-reducing, found in groundwater at high depths), \emph{Thermovibrio ammonificans} (Gram-negative,  thermophilic, anaerobic, chemolithoautotrophic bacterium found in deep-sea hydrothermal vent), \emph{Thermodesulfobacterium} (Gram-negative  thermophilic sulfate-reducing bacteria) and \emph{Desulfurobacterium} (Gram-negative, thermophilic, anaerobic, strictly autotrophic, sulphur-reducing bacterium). As we can see, the members are mostly Gram-negative bacteria or bacteria living in very high temperature equipped to process different chemical elements (mostly sulfates). This is reflected in the GO-functions that describe OGs associated to $f_{13}$, GO0043228 describes non-membrane-bounded organelle (not bounded by a lipid bilayer membrane) which is a definition of Gram-negative bacteria, whereas GO0046483 describes heterocycle metabolic process (the chemical reactions and pathways involving heterocyclic compounds, those with a cyclic molecular structure and at least two different atoms in the ring (or rings)) which corresponds to heavy chemical activity performed by many strains associated with $f_{13}$. From the query describing OG neighbourhood of OGs associated to $f_{13}$, the most understandable are COG2882 and COG1344 which are connected to cell motility (and there indeed are a number of strains like \emph{Burkholderia} that are motile). COG0050 and COG0806 are connected to translation, ribosomal structure and biogenesis and COG2965 with replication, recombination and repair. The rule describing OG neighbourhood is complicated (containing logical negation), thus all OGs that have the occurrence of any of the specified OGs outside the interval, marked inside the rule, are described by this rule.

\section{Conclusions}
\label{sec:conclusions}

This work presents a non-negative matrix factorization methodology that incorporates rule-based constraints to describe the resulting latent factors. The approach allows data fusion and provides advantages compared to available interpretable non-negative factorization approaches. We present and  evaluate three regularization terms and the corresponding multiplicative update rules that incorporate rule-based constraints into NMF optimization. The term that uses explicit information about the entity membership in support sets of different rules and the input entity-factor cost matrix uses the final entity-factor assignment indirectly ($\DNMF{ind}$). The $\DNMF{dir}$ utilizes information about the final entity-factor assignment directly and the $\DNMF{comb}$ combines properties of both aforementioned regularization terms. We provide theoretical and empirical study of properties of these approaches. Update rules are also presented in the greedy, oblique and HALS manner. Using proposed rule-clustering algorithm alleviates the need of experimentation to determine correct factor decomposition, with supervised rules, and integrates clustering and factorization. 

Experimental results performed on four different types of rules (supervised, descriptive conceptual, subgroups and redescriptions) show that the regularizer using final entity-factor assignment ($\DNMF{dir}$) mostly works better with predefined rule clusters associated to the latent factors. However, the approach using the cost matrix  ($\DNMF{ind}$)  usually has lower deviation in description accuracy between factors and shows promising results on high dimensional data. The $\DNMF{comb}$ has almost identical performance to the $\DNMF{ind}$. All three proposed approaches create latent factors with significantly higher correspondence with the input rule set than the competing approaches. Obtained results demonstrate that the proposed rule-clustering algorithm  offers competitive performance and often allows obtaining better result.  The overall representation accuracy achieved by the proposed approaches is expectedly lower compared to the regular NMF (varying between rule-sets and datasets). Although the proposed approach enforces a level of sparseness on matrix $\mF$, it substantially outperforms the approach with sparseness guarantees of this matrix. 

In the supervised tasks, the latent factors are often used as  features which are further used to increase accuracy of machine learning models. The presented use-case demonstrates that the proposed approach, utilizing information fusion, manages to create input features that improve predictive model performance on significantly larger number of target labels compared to features obtained from regular NMF approach. In this use-case, understanding the obtained features and their connection to the target label is of utmost importance for efficient study of the underlying problem and making further research decisions. We have demonstrated the in-depth knowledge provided by the proposed approach. In unsupervised tasks, the approach allows obtaining condensed representations, where latent factors form conceptual clusters.

The overall results of the approach could potentially be further improved by using rules produced by more recent approaches such as \cite{WangZLW24}.

\bibliographystyle{IEEEtran}
\begin{IEEEbiography}[{\includegraphics[width=1in,height=1.25in,clip,keepaspectratio]{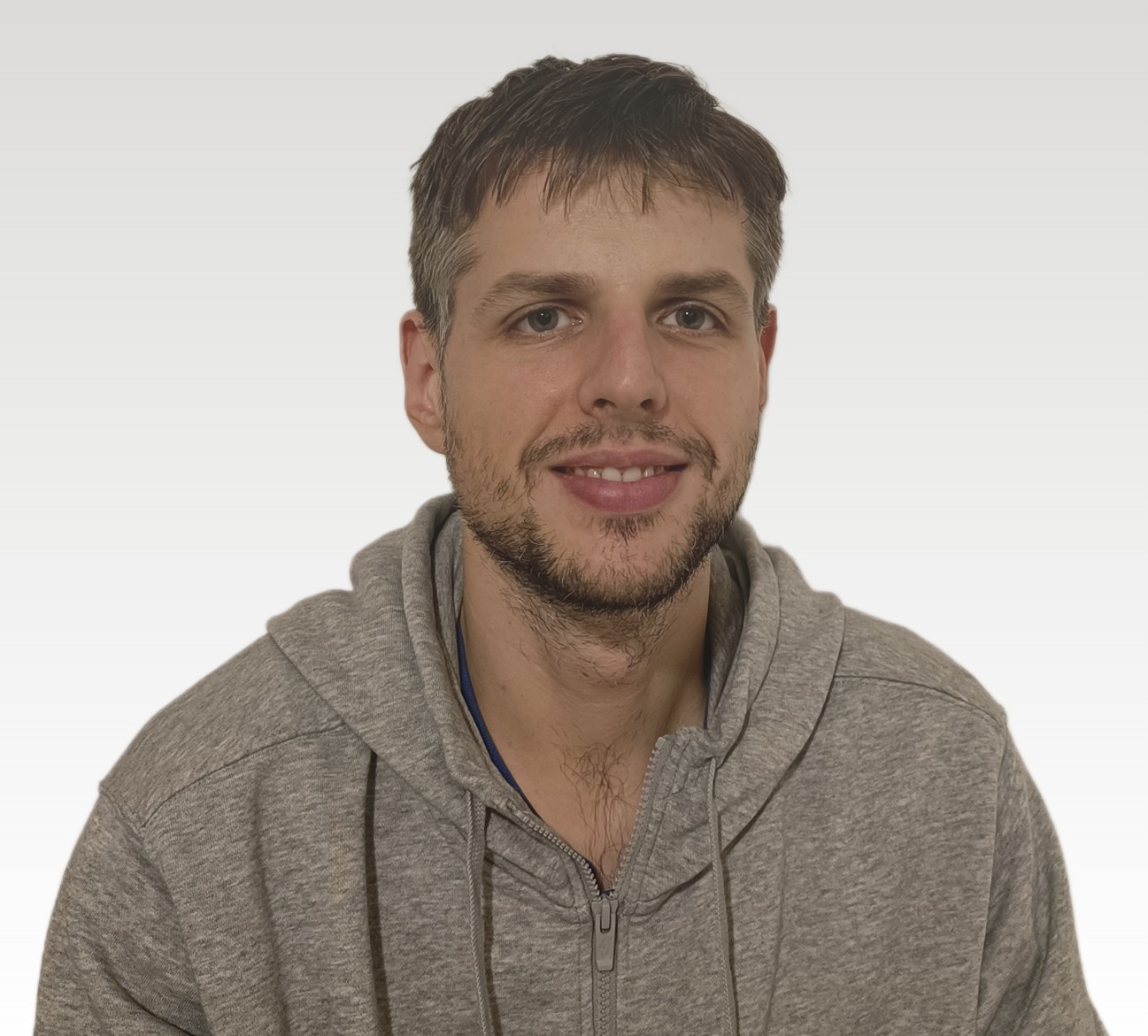}}]{Matej Mihel\v{c}i\'c}
 obtained his PhD degree in Computer Science in $2018$ from the International Postgraduate School Jožef Stefan in Ljubljana, Slovenia, while working at the Ruđer Bošković Institute in Zagreb, Croatia. From $2018-2021$ he was working as a postdoctoral researcher at the Department of Mathematics, Faculty of Science, University of Zagreb and from $2019 - 2020$ as a postdoctoral researcher at the School of Computing, University of Eastern Finland in Kuopio, Finland. He currently holds a position of the Assistant Professor at the department of Mathematics, Faculty of Science, University of Zagreb. He is an author of $21$ scientific articles. His research interests include redescription mining, interpretable machine learning, matrix factorization approaches and knowledge discovery.

Matej Mihelčić has served as a reviewer for $4$ different journals and $4$ different conferences and has been a PC member of ICDM, ECMLPKDD and PAKDD conferences.
\end{IEEEbiography}

\begin{IEEEbiography}[{\includegraphics[width=1in,height=1.25in,clip,keepaspectratio]{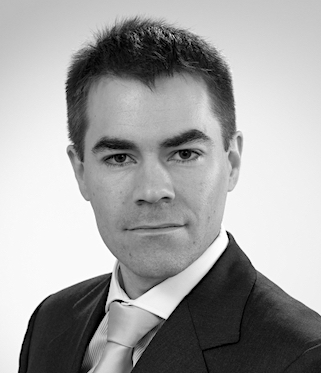}}]{Pauli Miettinen} received his PhD degree in computer science from University of Helsinki, Finland, in 2009. From 2010 to 2013 he was a Post-Doctoral Fellow and from 2013 until 2018 a Senior Researcher at Max-Planck-Institute for Informatics in Saarbr\"ucken, Germany. Since 2018, he has been a Professor of Data Science at University of Eastern Finland in Kuopio, Finland. He is an author of more than fifty peer reviewed articles. His research interests include matrix and tensor factorizations over non-standard algebras, graph mining and social network analysis, redescription mining, and applications of data mining techniques. He is an action editor of the journal \textit{Data Mining and Knowledge Discovery}.

  Prof. Miettinen has won three best paper awards. He was the program co-chair for 2021 IEEE International Conference on Data Mining. 
\end{IEEEbiography}

\includepdf[pages=-]{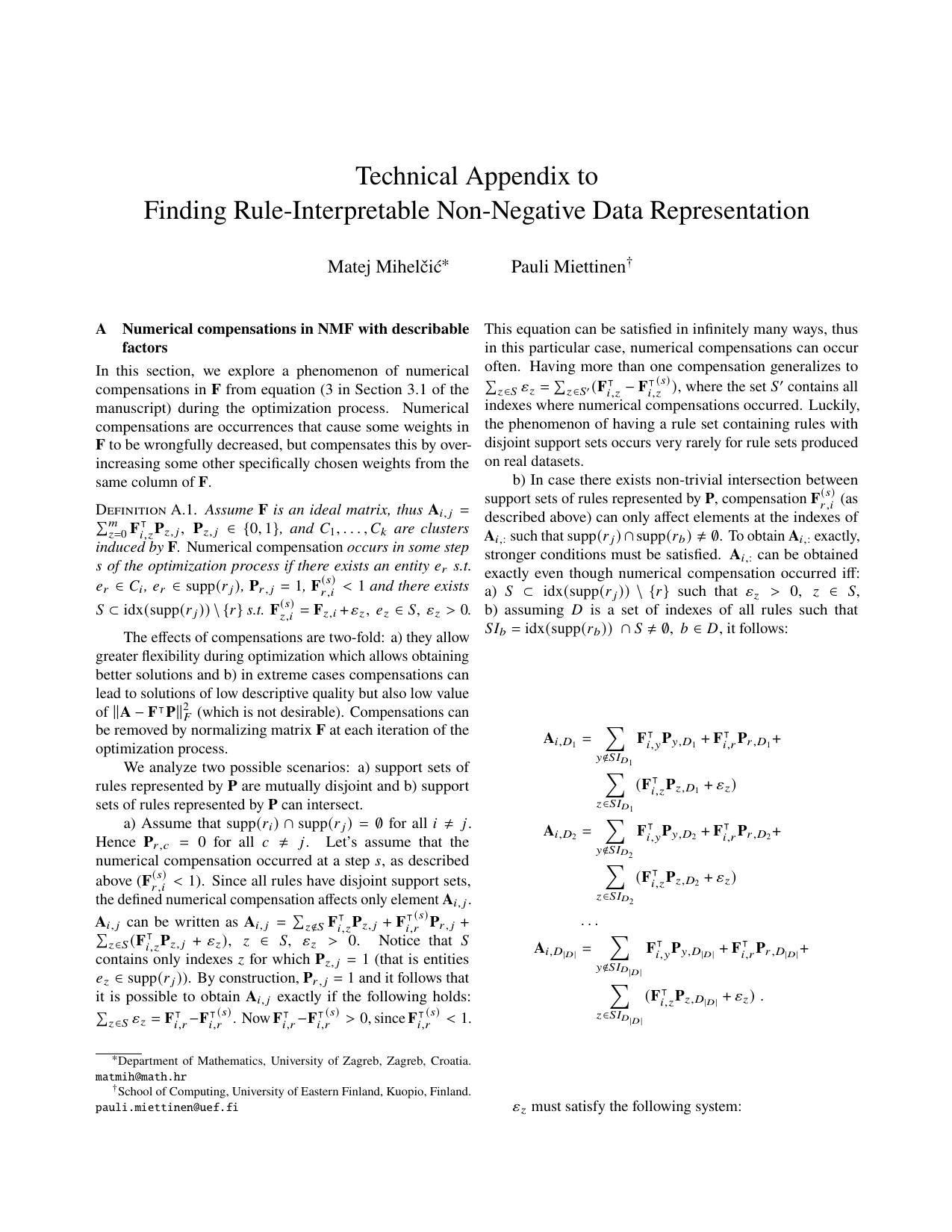}

\end{document}